%% file: article.tex
\documentclass{article}

\usepackage{arxiv}
\usepackage{amssymb}
\usepackage{amsmath}
\usepackage[numbers, sort&compress]{natbib}
\usepackage{lineno}
\usepackage{hyperref}
\usepackage{blindtext}
\usepackage{csquotes}
\usepackage{enumitem,amssymb}
\usepackage{bm}
\usepackage{pifont}
\usepackage{amsmath}
\usepackage{graphicx}
\usepackage{subfigure}
\usepackage{placeins}
\usepackage{algpseudocode}
\usepackage{algorithm}
\usepackage{rotating}
\usepackage{booktabs}
\usepackage{subcaption}
\usepackage{tikz}
\usepackage{adjustbox}
\usepackage{booktabs}
\usepackage{enumitem}
\usepackage[format=plain]{caption}
\usepackage{tcolorbox}
\usepackage{lipsum}
\tcbuselibrary{skins,breakable}
\usetikzlibrary{shadings,shadows}
\usepackage[parfill]{parskip}
\usepackage{stackengine}
\usepackage{censor}
\usepackage[toc,page]{appendix}
\usepackage{tabularx}
\usepackage{amsthm}
\usepackage{dsfont}
\usepackage[onehalfspacing]{setspace}

\begingroup
\makeatletter
\@for\theoremstyle:=definition,remark,plain\do{%
	\expandafter\g@addto@macro\csname th@\theoremstyle\endcsname{%
		\addtolength\thm@preskip\parskip
	}%
}
\endgroup

{\endtcolorbox}

\usetikzlibrary{bayesnet}
\usetikzlibrary{arrows}
\usetikzlibrary{shapes,arrows,calc}
\usetikzlibrary {shapes.geometric}

\pgfdeclarelayer{background}
\pgfdeclarelayer{foreground}
\pgfsetlayers{background,main,foreground}

\usepackage[T1]{fontenc}    % use 8-bit T1 fonts
\usepackage{url}            % simple URL typesetting
\usepackage{booktabs}       % professional-quality tables
\usepackage{amsfonts}       % blackboard math symbols
\usepackage{nicefrac}       % compact symbols for 1/2, etc.
\usepackage{microtype}      % microtypography

\newlist{todolist}{itemize}{2}
\setlist[todolist]{label=$\square$}
\newcommand{\cmark}{\ding{51}}%
\newcommand{\xmark}{\ding{55}}%

\definecolor{ibmorange}{HTML}{FE6100}
\definecolor{ibmpurple}{HTML}{785EF0}
\definecolor{ibmblue}{HTML}{648FFF}
\definecolor{ibmpink}{HTML}{DC267F}
\definecolor{tolsand}{HTML}{DDCC77}
\definecolor{ibmaqua}{HTML}{31C9B0}

\newtheorem{remark}{Remark}

\newtheorem{lemma}{Lemma}
\newtheorem{proposition}{Proposition}
\newtheorem{corollary}{Corollary}

\newtheorem{assumptions}{Assumption}

\title{Explainable Condition Monitoring via Probabilistic Anomaly Detection Applied to Helicopter Transmissions}

\date{}

\author{
  Aurelio Raffa Ugolini \\
  DEIB, Politecnico di Milano \\
  Via Giuseppe Ponzio 34, Milano, 20133 (Italy) \\
  \texttt{aurelio.raffa@polimi.it} \\
    \And
  Jessica Leoni \\
  DEIB, Politecnico di Milano \\
  Via Giuseppe Ponzio 34, Milano, 20133 (Italy) \\
  \And
  Valentina Breschi \\
  Dep. of Electrical Engineering, TU\textbackslash e \\
  Groene Loper 19, Eindhoven, 5600 MB (The Netherlands) \\
  \And
  Damiano Paniccia \\
  Leonardo Labs, Leonardo S.p.A. \\
  Piazza Monte Grappa 4, Roma, 00195 (Italy) \\
  \And
  Francesco Aldo Tucci \\
  Leonardo Labs, Leonardo S.p.A. \\
  Piazza Monte Grappa 4, Roma, 00195 (Italy) \\
  \And
  Luigi Capone \\
  Leonardo Labs, Leonardo S.p.A. \\
  Piazza Monte Grappa 4, Roma, 00195 (Italy) \\
  \And
  Mara Tanelli \\
  DEIB, Politecnico di Milano \\
  Via Giuseppe Ponzio 34, Milano, 20133 (Italy) \\
}

\begin{document}
\StopCensoring
\maketitle

%% Abstract
\begin{abstract}
%% Text of abstract
We present a novel Explainable methodology for Condition Monitoring, relying on healthy data only.
Since faults are rare events, we propose to focus on learning the probability distribution of healthy observations only, and detect Anomalies at runtime.
This objective is achieved via the definition of probabilistic measures
of deviation from nominality, which allow to detect and anticipate faults.
The Bayesian perspective underpinning our approach allows us to perform Uncertainty Quantification to inform decisions.
At the same time, we provide descriptive tools to enhance the interpretability of the results, supporting the deployment of the proposed strategy also in safety-critical applications.
The methodology is validated experimentally on two use cases: a publicly available benchmark for Predictive Maintenance, and a real-world Helicopter Transmission dataset collected over multiple years.
In both applications, the method achieves competitive detection performance with respect to state-of-the-art anomaly detection methods.
\end{abstract}

%% Keywords
\keywords{Anomaly Detection \and Explainability \and Uncertainty Quantification \and Vibration Condition Monitoring \and Helicopter Transmission Systems}

%% Use \section commands to start a section
\input{sections/01_introduction}

\input{sections/03_cocoafuse}
\input{sections/04_anomaly_score}
\input{sections/05_model_evaluation}
\input{sections/06_experiments}

\section{Conclusions}\label{sec:conclusions}
	In this work, we have introduced an
    explainable Anomaly Detection technique,
    suitable for Condition Monitoring,
    and showcased its application
    to the field of Transmission Vibration Monitoring for Helicopters.
	Its key advantages lie in
    the robustness of its
    probabilistic perspective, which does not require faults to train the models.
    It is
    supported by a principled approach to Uncertainty Quantification,
    as well as its
    interpretability by design, enabled by the use of simple submodels appropriately fused throughout the input domain.
	Though experimental validation, we have demonstrated that using
    a model for the conditional response distribution
    is effective in anticipating faults,
    as proven by the competitiveness with established baselines (see~\citet{2024leoniExplainableDatadrivenModeling}).
	Still, many avenues for research remain open.
    For example, modeling the time dependence of the HIs' conditional distribution could account for the progressive wear of components,
    further distinguishing it from sudden failures.
	Also, more flexible distributions for the experts, e.g., in some heavy-tailed family, could more effectively capture the nature of the responses.
	Finally, the MCMC framework adopted here to fit the model is computationally demanding and does not scale well with data.
	We intend to investigate all of these aspects over our next works, with hopes it will foster exchanges between domain experts and analysts working on CM.

\bibliography{references.bib}

\newpage
\appendix
\input{sections/99_appendices}

\end{document}

%% file: sections/01_introduction.tex
\section{Introduction}\label{sec:introduction}
The need for probing the degradation state (\textit{diagnostics}) of active machinery without relying on invasive testing has ignited a plethora of research directions, which broadly fall under the name of ``Condition Monitoring'' (CM).
In CM, data from sensors mounted on machines (like motors, pumps, or turbines) is analyzed to identify patterns that deviate from normal behavior.
The outcomes of these analyses are valuable diagnostic tools for Predictive Maintenance (PM) in many sectors (see, e.g.,~\citet{1996raoHandbookConditionMonitoring, 2015ambhoreToolConditionMonitoring, 2022badihiComprehensiveReviewSignalBased}).
As part of a Predictive Maintenance strategy, CM allows for maintenance to be performed based on actual equipment condition rather than predefined schedules, minimizing unplanned downtime, while extending component life, and improving overall system reliability.

CM exploits the idea that the progressive degradation of the machinery, either as a consequence of normal wear-and-tear or sudden faults, influences the characteristics of some observables (``responses'') in detectable ways.
Therefore, if a suitable characterization of the system response is obtained,
one can use it to deduce the current state of the machinery, as well as the nature of a (incipient) fault, just by observing \textit{outputs} of the system.
Generally speaking, the previous task is performed in CM by classifying the state of the monitored object as either healthy (``nominal'') or not (``faulty'').
Different solutions to this problem have been proposed, resulting in distinct kinds of approaches that can be broadly categorized as follows.
\begin{enumerate}
    \item Ad-hoc, Indicator-based strategies, which rely on quantities (often referred to as ``Health Indices'') specifically engineered to discriminate among condition types.
    Although highly explainable, these approaches are generally application and context-dependent, and require expert calibration (see~\citet{2014zimrozDiagnosticsBearingsPresence, 2021soolonwahRegressionbasedDamageDetection}).
    They are thus not flexible or easy to automate.
    \item Supervised Learning strategies, which aim to detect and rate a possible fault based on labeled healthy and faulty examples (see, e.g.,~\citet{2019stetcoMachineLearningMethods, 2020mubaraaliIntelligentFaultDiagnosis, 2023yessoufouClassificationRegressionbasedConvolutional}).
    Achieving a representative sample of each of the target classes to train a supervised classifier, however, can be difficult -- and extremely expensive -- considering that faults are generally rare (see~\citet{2022leoniNewComprehensiveMonitoring}).
    \item Single-class Learning strategies, wherein the condition classification rule is derived from healthy data only.
    These techniques aim at building boundaries around healthy data to detect faulty instances at a later time (deployment)\footnote{This task is known as
    Anomaly Detection (see~\citet{2009chandolaAnomalyDetectionSurvey}).}.
    One-Class Support Vector Machines (SVMs) (e.g.,~\citet{2021turnbullCombiningSCADAVibration,
    2022britoExplainableArtificialIntelligence}, and~\cite{2022huIntelligentAnomalyDetection}) and Autoencoders (e.g.,~\citet{2022leoniNewComprehensiveMonitoring, 2023mengAnomalyDetectionConstruction,
    2024jungAIBasedAnomalyDetection}) are popular tools for such use cases.
    For learning-based methods, nevertheless, a trade-off is usually present as more complex models tend to be more Flexible, albeit less Interpretable.%
\end{enumerate}
A schematic overview of the aforementioned methodologies can be found in Table~\ref{tab:anomaly-detection-approaches}, rated according to their General applicability to different contexts (\textbf{G}), the Interpretability of their decision-making logic (\textbf{I}), their Flexibility in modeling (\textbf{F}), and the possibility of devising them via Healthy data only (\textbf{H}).
As is clear from Table~\ref{tab:anomaly-detection-approaches}, a first contrast is immediately apparent among approaches relying on domain knowledge (1) versus data-driven insights (2-3, e.g., see~\citet{2023surucuConditionMonitoringUsing, 2024arenaConceptualFrameworkMachine}). Indeed, while the formers guarantee Interpretability, they lack the General applicability and Flexibility of the other classes of approaches. In contrast, the latter guarantee Generality and Flexibility, often at the expense of transparency.
\begin{table}[t!]
    \begin{tabularx}{\textwidth}{Xllll}
        \toprule
            Strategy&
                \textbf{G}&
                \textbf{I}&
                \textbf{F}&
                \textbf{H}\\
        \midrule
            Ad-Hoc Indicators (e.g.,~\citet{2021soolonwahRegressionbasedDamageDetection})&
                \xmark&
                \cmark&
                \xmark&
                \xmark\\
            Multi-class Learning (e.g.,~\citet{2023yessoufouClassificationRegressionbasedConvolutional})&
                \cmark&
                \xmark&
                \cmark&
                \xmark\\
            Single-class Learning (e.g.,~\citet{2024jungAIBasedAnomalyDetection})&
                \cmark&
                \xmark&
                \cmark&
                \cmark\\
        \bottomrule
    \end{tabularx}
    \caption{Condition Monitoring (CM) Strategies and their typical Advantages (\textbf{G}: General applicability; \textbf{I}: Interpretability; \textbf{F}: Flexibility; \textbf{H}: requiring a Healthy sample only).}\label{tab:anomaly-detection-approaches}
\end{table}

Irrespective of the approach taken, two modeling challenges are to be expected in CM applications.
First, explicit characterizations of the response of the system under \textit{any} possible condition are practically unattainable.
Second, consistently with a notion of ``progressiveness'' of the observable effects,
such responses are mostly overlapped when the severity of degradation is similar (\citet{2005choyVibrationMonitoringDamage, 2014zimrozDiagnosticsBearingsPresence}).
Both challenges are compounded by the fact that, in practice, only a finite set of noisy observations is available.
Due to such issues, approaches based on static thresholds or supervised classification algorithms may fail, especially in low-data regimes.
Single-class methods are thus not only \textit{convenient} for CM when little data from the faulty class(es) is available, but might become a necessity \textit{altogether} when the actual state of the machinery is not fully or sufficiently observed.

\paragraph{Our take on the problem}
To mitigate the issues related to the rarity of fault events, our approach in this work consists of detecting departures from the nominal response
through a single-class learning approach (Anomaly Detection).
Instead of postulating hard boundaries around what we assume to be the nominal condition of the machinery, however, we
use probabilistic methods to learn the statistical characteristics of healthy observations.
By carefully selecting the class of probabilistic techniques to employ,
we ensure transparency to end users.
Specifically, we adopt CoCoAFusE (\citet{2025raffaugoliniCoCoAFusEMixturesExperts}),
since it can handle both multi-modal and smooth descriptions, while explaining the effects of individual predictors on the output.
Finally, we use the aforementioned descriptions to evaluate how (in)consistent new observations are with respect to the healthy condition.
This information can then be used to determine
how closely the response matches the healthy state of the machinery.
This is an important aspect of our methodology since, by taking uncertainty into account, it provides a more general and robust decision-making strategy than setting fixed thresholds.
While we see the breadth of our methodology as very general, in light of the character of data they apply to, we will test it on the following application contexts.

\paragraph{Machine Fault Detection}
Machine Fault Detection is a specific type of data-driven CM focused on identifying deviations in equipment behavior that may indicate potential mechanical failures. Using real-time telemetry data such as current, temperature, or pressure signals, it aims at highlighting and anticipating faults.

\paragraph{Vibration Condition Monitoring (VCM) of Helicopter Transmissions}
VCM is a family of techniques within CM to assess mechanical equipment health, by specifically focusing on \textit{vibration} signals (see~\citet{2022tiboniReviewVibrationBasedCondition, 2023fengReviewVibrationbasedGear, 2023romanssiniReviewVibrationMonitoring}).
It has gained popularity in fields
such as Manufacturing (\citet{2014leiConditionMonitoringFault}), Energy and Utilities (\citet{2019wangVibrationBasedCondition}), and Civil Engineering (\citet{2016dasVibrationbasedDamageDetection}),
by virtue of its applicability to rotating machinery.
In particular, VCM plays a critical role in the maintenance and safety of Helicopter Transmission systems (see, e.g.,~\citet{2005samuelReviewVibrationbasedTechniques, 2021mauricioAdvancedSignalProcessing}), which include gearboxes, shafts, bearings, and clutches responsible for transferring power from the engine to the rotors.
Transmission systems are vital in helicopters as they provide non-redundant paths for power transfer from engines to rotors.
Therefore, a fault in even a single gear can compromise the entire helicopter, posing high costs and safety risks (\citet{2022leoniNewComprehensiveMonitoring}).
At the same time, due to their complexity, high loads, and harsh environments, transmission components are prone to wear, fatigue, and failure.
These criticalities explain the long-standing and active research focus on VCM for Helicopter Transmission systems.

\paragraph{Outline}
The rest of the article is organized as follows.
In Section~\ref{sec:problem-statement}, we introduce the addressed problem.
Section~\ref{sec:cocoafuse} summarizes our modeling choices for learning the nominal distributions of the responses.
In Section~\ref{sec:anomaly-scores}, we present the main contributions of this work, defining a set of probabilistic measures of deviation from a nominal distribution, which will be then used for Anomaly Detection.
In Section~\ref{sec:mod-eval}, we formalize the evaluation metrics adopted to select and assess our algorithm;
we also include a set of tools tailored to make the predictive machinery more transparent to end users.
Sections~\ref{sec:experiments-azure}-\ref{sec:experiments-helicopter} outline the experimental datasets used to validate the technique and expose our findings.
In detail, we present a Machine Fault Detection example in Section~\ref{sec:experiments-azure}, while a Transmission Monitoring application in Section~\ref{sec:experiments-helicopter}.
Finally, Section~\ref{sec:conclusions} contains a discussion and our conclusions.

\section{Setting and Goal}\label{sec:problem-statement}
Let $y$ (``response'') a specific Health Index (HI), i.e., one of the variables which are monitored to highlight the presence of (impending) faults or anomalies.
Meanwhile, let $\bm{x}$ (``covariates'') indicate all additional variables that are assumed to have an effect on $y$, which include other HIs as well as additional observables, e.g., pertaining to conditions external to the machinery.
Lastly, let index $t$ refer to a sampling instant for the observables, so that $\left(\bm{x}_t, y_t\right)$ denotes a covariates-response pair observed at time $t$.

If $\bm{x}_t$ were to encode the entire state of the machinery, under perfect knowledge of all physical processes acting on the machinery (e.g., mechanical, kinematic, thermodynamical, etc.), we could in principle exploit a deterministic map $\bm{x}_t \mapsto y_t$ to compure the response.
However, neither all conditions nor physical interactions are fully known.
We thus postulate the existence of a conditional probability distribution,
explaining the influence of $\bm{x}_t$ on $y_t$,
\begin{equation}\label{eq:hi-conditional-distribution}
    y_t | \bm{x}_t, \bm{\theta} {\sim} p\left(y_t | \bm{x}_t;\bm{\theta}\right),
\end{equation}
which we refer to as the ``model'' for the relationship between $\bm{x}_t$ and $y_t$.
This model
depends on a
set of unknown
parameters, which we indicate by $\bm{\theta}$.
Assuming~\eqref{eq:hi-conditional-distribution} exists, we face the problem of ``learning'' it from a sample of data, i.e. $\mathcal{D} = \left\{ \left(y_t, \bm{x}_t\right),\ t=1,\dots,N\right\}$.
Notice that~\eqref{eq:hi-conditional-distribution} alone does not specify what is the joint density of $\{y_t\}_{t=1}^N$ conditionally on $\{\bm{x}_t\}_{t=1}^N$ and $\bm{\theta}$.
To this end, we assume the following.
\begin{assumptions}\label{ass:conditional-independence}
	Consider a dataset $\mathcal{D}$, $\bm{x}_t\in\mathbb{R}^{n}$ and $y_t\in\mathbb{R}$ for $t=1,\dots,N$.
	The responses $\bm{y} = [y_1, \dots, y_N]^\top$ given the covariates $\bm{X} = \left[\bm{x}_1, \dots, \bm{x}_N\right]$ and $\bm{\theta}$ are conditionally independent, with a likelihood of the form
    \begin{equation*}
        \textstyle p(\bm{y}|\bm{X}; \bm{\theta}) = \prod_{t=1}^N p(y_t|\bm{x}_t; \bm{\theta}).
    \end{equation*}
\end{assumptions}
The previous assumption can be interpreted as a way of formally conveying that
all useful information on $y_t$ is \textit{exclusively} contained in $\bm{x}_t$ and  $\bm{\theta}$;
the remaining variability of $y_t$
is treated as a \textit{disturbance}, acting as a source of randomness.
Net of the dependencies on $\bm{x}_t$ and  $\bm{\theta}$, such randomness
should not vary with the measurement time $t$,
which justifies both
using healthy data only in learning~\eqref{eq:hi-conditional-distribution},
and using the learnt distributions to carry out the Anomaly Detection task.
Assumption~\ref{ass:conditional-independence} thus greatly simplifies the Bayesian workflow (see~\citet{2021vandeschootBayesianStatisticsModelling}), which has the goal of obtaining a \textit{posterior} distribution for the model parameters,
representing the knowledge that was gained by analyzing the collection of observations in $\mathcal{D}$.

Even when the conditional law of a target $y$ on a set of covariates $\bm{x}$ (\textit{and} model parameters $\bm{\theta}$) is available as per~\eqref{eq:hi-conditional-distribution}, it is still nontrivial to conclude that an observation does not follow the same law. Moreover,
this conclusion cannot usually be reached with absolute certainty (i.e., deterministically).
Indeed, except under very particular characteristics of the law $y|\bm{x}; \bm{\theta}$ (e.g., finite or countable support), a specific realization of $y$ can be more or less likely, but not (almost-surely) impossible.
In this work, we %would then like
aim to distinguish those realizations that are very \textit{improbable} according to our conditional model, albeit not \textit{impossible}.
To do so, we shall define a quantity that measures how unlikely an observation (or a set thereof) is according to the model.

%% file: sections/03_cocoafuse.tex
\section{Learning Conditional Densities with CoCoAFusE}\label{sec:cocoafuse}
As a first step toward devising our Bayesian Anomaly Detection strategy, we aim at learning a density for the responses $y_t$ given a corresponding set of covariates $\bm{x}_t$. To this end, we use an extension of the Mixture of Experts (MoE, see~\citet{2012yukselTwentyYearsMixture}), called \emph{Competitive/CollAborative Fusion of Experts} (CoCoAFusE), recently proposed in~\citet{2025raffaugoliniCoCoAFusEMixturesExperts}.
Belonging to the family of Conditional Dependent Mixture Models (CDMMs), known to balance smoothness and flexibility (see~\citet{2023wadeBayesianDependentMixture}), this choice guarantees the model's interpretability. Indeed, CoCoaFusE can be interpreted as a stochastic covariates-dependent selection of an appropriate sub-model, followed by a prediction from the latter, with the additional advantage of better capturing intermediate scenarios between the local descriptions
by leveraging the ``fusion'' mechanism.
Fusion enables the modeling of the response within a continuous family of interpolations between a mixture of the local sub-models (which is employed in the classical MoE), and a single density which combines the characteristic of all sub-models (the ``blend'').
The elements that characterize CoCoAFusE are:
\begin{enumerate}
    \item \textbf{The base experts $\{\mathcal{E}_i\}_{i=1}^{M}$}, namely a set of $M$ generating mechanisms (sub-models) for the target variable $y_t \in \mathbb{R}$, conditional on a set of covariates $\boldsymbol{x}_t$ and (uncertain) parameters $\boldsymbol{\theta}_i \in \mathbb{R}^{n_e}$;
    \item \textbf{The mixing gate $\mathcal{G}$}, generating a weights over the set of sub-models conditionally on $\bm{x}_t \in \mathbb{R}^{n}$ and gate-specific parameters $\bm{\theta}_{\mathcal{G}} \in \mathbb{R}^{n_{g}}$;
    \item \textbf{The behavior gate $\mathcal{B}$}, with parameters $\bm{\theta}_{\mathcal{B}}\in \mathbb{R}^{n_{b}}$, that outputs a scalar $\beta \in (0,1)$ controlling the trade-off between between blending/collaboration ($\beta\to0$) and mixing/competition ($\beta\to1$);
    \item A set of \textbf{priors} on the uncertain parameters $\bm{\theta}$ (i.e., the collection of $\bm{\theta}_1,\dots,\bm{\theta}_M, \bm{\theta}_{\mathcal{G}}, \bm{\theta}_{\mathcal{B}}$), which describe the initial level of uncertainty.
\end{enumerate}

Thanks to Assumption~\ref{ass:conditional-independence}, these elements can be used to specify the model's likelihood by only defining the conditional densities $p(y_t|\bm{x}_t; \bm{\theta})$ as
%Without loss of generality, we shall drop the index $n$ and proceed with a generic pair $(\bm{x}, y)$.
\begin{equation}\label{eq:cocoafuse-likelihood}
    \textstyle p(y_t|\bm{x}_t; \bm{\theta}) = \textstyle \sum_{i=1}^M p_i^{(\beta)}\left(y_t|\bm{x}_t; \bm{\theta}\right) p(z = i | \bm{x}_t; \bm{\theta}_{\mathcal{G}}),
\end{equation}
%In~\eqref{eq:cocoafuse-likelihood},
where $z \in \{1,\ldots,M\}$, denotes the (latent) allocation to expert $i \in \{1,\ldots,M\}$, and the associated probability $p(z=i|\bm{x}_t,\bm{\theta}_{\mathcal{G}})$, conditional to the covariates $\bm{x}_t \in \mathbb{R}^{n}$ and mixing gating parameters $\bm{\theta}_{\mathcal{G}}$.
Meanwhile, the density $p_i^{(\beta)}\left(y_t|\bm{x}_t; \bm{\theta}\right)$, is the result of the fusion mechanism applied to the base experts, % $i \in \{1,\ldots,M\}$,
according to the output $\beta$ of the behaviour gate (see~\citet[Sections 2-3]{2025raffaugoliniCoCoAFusEMixturesExperts}).
To complete our description of CoCoAFusE,
we specify our choices for the gates, experts, and prior distributions.

\paragraph{Parameterization of the gates}
A possible choice for the gate probabilities is to consider a categorical distribution, namely
\begin{equation}\label{eq:cat_distr}
    \textstyle z|\bm{x};\bm{\theta}_{\mathcal{G}}\sim\textup{Cat}(\alpha(\bm{x};\bm{\theta}_{\mathcal{G}})),
\end{equation}
where $\alpha:\mathbb{R}^{n} \rightarrow \Delta^{M}$ is a user-defined shaping function based on the covariates ($\Delta^M$ is the $M-$dimensional simplex).
In our test,%
\begin{equation}\label{eq:gate}
    \textstyle \bm{\alpha}(\bm{x};\bm{\theta}_{\mathcal{G}})=
    \textup{softmax}(\bm{A}_{\mathcal{G}}\Phi(\bm{x})),
\end{equation}%
where $\Phi:\mathbb{R}^{n} \to \{1\}\times\mathbb{R}^{n_a-1}$ is an embedding function, and $\bm{A}_{\mathcal{G}}\in\mathbb{R}^{M \times n_a}$ is a matrix-rearrangement of $\bm{\theta}_{\mathcal{G}}$.
In this way, the local explainability of predictions is preserved as a gated superimposition of \enquote{simple} decisions.
We further define $\beta$ as the output of an additional \textit{behavior} gate, with parameters $\boldsymbol{\theta}_{\mathcal{B}}\in\mathbb{R}^{n_a}$ defined, relying on the same shaping function $\Phi$:%
\begin{equation}\label{eq:behavior-gate}
    \textstyle	\beta = {\exp\left(\bm{\theta}_{\mathcal{B}}^\top \Phi(\bm{x})\right)}/\left({1 + \exp\left(\bm{\theta}_{\mathcal{B}}^\top \Phi(\bm{x})\right)}\right) = \textup{expit}\left(\bm{\theta}_{\mathcal{B}}^\top \Phi(\bm{x})\right),
\end{equation}%
\begin{remark}[Gate's priors and identifiability issues]\label{rem:mix-gate-param}
    When the mixing gate is parameterized as discussed,
    pathological behaviors might arise as the softmax function is invariant to translations of its inputs by identical offsets in each coordinate (i.e., by vectors $\bm{k} = [k, \dots, k]^\top\in\mathbb{R}^M$ with $k\in\mathbb{R}$). This issue can be addressed by setting the last row of $\bm{A}_\mathcal{G}$ to zero.
    This implies that the last expert always acts as a ``reference'' for the other inputs to the softmax.
\end{remark}
\paragraph{Affine Gaussian Base Experts} We focus on the specific case where the base densities, conditionally on the covariates $\bm{x}_t\in\mathbb{R}^{n}$, are proper Gaussians, with mean function affine in $\bm{x}_t$.
The fusion mechanism hence boils down to
\begin{equation}\label{eq:experts-difference-equation}
    p_i^{(\beta)}\left(y_t|\bm{x}_t; \bm{\theta}\right) =  \mathcal{N} \left(\theta_{i, 0}^{(\beta)} + \bm{x}_t^\top\bm{\theta}_{i, 1:n}^{(\beta)}, \theta_{i, n+1}^{(\beta)} \right),
\end{equation}
where, by the fusion mechanism (see~\citet{2025raffaugoliniCoCoAFusEMixturesExperts}):
\begin{subequations}\label{eq:fusion-parameters}
    \begin{align}
        \bm{\theta}_{i, 0:n}^{(\beta)} &= \textstyle\beta \bm{\theta}_{i, 0:n} + (1 - \beta) \left(\sum_{i \in S}\alpha_i \bm{\theta}_{i, 0:n}\right),\\
        \theta_{i, n+1}^{(\beta)} &= \textstyle\sqrt{\beta \theta_{i, n+1}^2 + (1 - \beta) \left(\sum_{i \in S}\alpha_i\theta_{i, n+1}^2\right)}
    \end{align}
\end{subequations}
with $\boldsymbol{\theta}_i \in \mathbb{R}^{n + 2}$ being the parameters of the base expert $\mathcal{E}_i$, for $i=1,\dots,M$.

\paragraph{Choice of Prior Distributions}
Experts are fully characterized parametrically once $\bm{\theta}$ and covariates $\bm{X}$ are assigned.
We choose Laplace-distributed independent priors on the coefficients of the base means' functions $\bm{\theta}_{i, 0:n}$, with the defaults for the location and scale parameters being 0 and 1, respectively.
Similar priors are chosen for the mixing and behaviour gate coefficients (except for the last row of the matrix $\bm{A}_\mathcal{G}$).
This choice is made in light of the slower decay of the tails compared to the Gaussians.
Meanwhile, we assign independent log-normal densities to the standard errors $\theta_{i, n+1}$.

%% file: sections/04_anomaly_score.tex
\section{Building Measures of Anomaly}\label{sec:anomaly-scores}
We now move on to the central part of this work,
discussing our definition of
an indicator of ``plausibility'' for observations, according to the model~\eqref{eq:hi-conditional-distribution}.
Given a (consecutive) sequence of realizations $\mathcal{P}_t^{k} = \{(y_s, \bm{x}_s)\}_{s=t - k}^{t}$,
our objective is to define a measure
$AS_{\mathcal{D}}\left(\mathcal{P}_t^{k}\right)$,
of how ``extreme'' a window of observations
$\mathcal{P}_t^{k}$
is, upon having seen a dataset of examples $\mathcal{D}$.
To do so, we first define an instrumental quantity $Q\left(\mathcal{P}_t^{k}; \bm{\theta}\right)$, fulfilling
\begin{equation}\label{eq:F-function-property}
    \mathbb{P}\left( Q\left(\mathcal{P}_t^{k}; \bm{\theta}\right) \le u | \bm{\theta}\right) \stackrel{\forall u \in (0, 1)}{=} u.
\end{equation}
Throughout the remainder of the section, we shall build such a quantity $Q\left(\mathcal{P}_t^{k}; \bm{\theta}\right)$, assuming that the data generating mechanism is precisely that of (independent) samples from $y|\bm{x}; \bm{\theta}$.
The proofs of the following results are reported in~\ref{app:proofs} for brevity.
We first introduce the following.
\begin{proposition}[The Probability Transform]\label{prop:cdf-transform}
	Let $Z$ be a continuous random variable with full support on $\mathbb{R}$ with cumulative distribution function $F_Z:\mathbb{R}\to(0, 1)$, i.e., $Z \sim F_Z$.
	Then, $U = F_Z(Z)$ has uniform distribution on $(0, 1)$.
\end{proposition}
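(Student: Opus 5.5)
The plan is to compute the distribution function of $U = F_Z(Z)$ directly and show that $\mathbb{P}(U \le u) = u$ for every $u \in (0,1)$. Since $F_Z$ takes values in $(0,1)$, $U$ is supported in $(0,1)$. It therefore suffices to identify the cumulative distribution function of $U$ on that interval, because $\mathbb{P}(U\le u)$ is then forced to be $0$ for $u\le 0$ and $1$ for $u\ge 1$.

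\textbf{Step 1: properties of $F_Z$.} Because $Z$ is continuous, $F_Z$ is continuous. Because $Z$ has full support on $\mathbb{R}$, every interval $(a,b)$ has positive mass, so $F_Z(b) - F_Z(a) > 0$ whenever $a<b$; hence $F_Z$ is strictly increasing. Combined with $\lim_{z\to-\infty}F_Z(z)=0$ and $\lim_{z\to+\infty}F_Z(z)=1$, the intermediate value theorem shows that $F_Z:\mathbb{R}\to(0,1)$ is a continuous bijection with a continuous, strictly increasing inverse $F_Z^{-1}:(0,1)\to\mathbb{R}$.

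\textbf{Step 2: the computation.} Fix $u\in(0,1)$ and set $z_u = F_Z^{-1}(u)$. By strict monotonicity, $\{F_Z(Z)\le u\} = \{Z \le z_u\}$ as events. Therefore
\begin{equation*}
\mathbb{P}(U\le u) = \mathbb{P}(Z\le z_u) = F_Z(z_u) = u,
\end{equation*}
which is exactly the uniform distribution function on $(0,1)$.

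\textbf{Main obstacle.} The only delicate point is the equality of events in Step 2, which needs both continuity (so the inverse exists and the level $u$ is attained) and strict monotonicity (so no flat pieces enlarge the preimage). This is precisely where the two hypotheses are used, so I would state them carefully. If one wanted to relax full support, the fallback would be the generalized inverse $F_Z^{-1}(u)=\inf\{z: F_Z(z)\ge u\}$: flat stretches of $F_Z$ would then alter the event only by a set of probability zero, and continuity would still give $F_Z(F_Z^{-1}(u))=u$. Under the stated hypotheses, however, this refinement is unnecessary.
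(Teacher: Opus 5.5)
Your proof is correct and follows the same route as the paper: show $F_Z$ is strictly increasing and invertible onto $(0,1)$, then compute $\mathbb{P}(U\le u)=\mathbb{P}(Z\le F_Z^{-1}(u))=F_Z(F_Z^{-1}(u))=u$. The paper instead gets strict monotonicity from a strictly positive density via the Fundamental Theorem of Calculus, whereas you get it directly from full support (positive mass on every interval) and get continuity from the absence of atoms; your version is, if anything, the more careful and more general one.
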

\begin{corollary}\label{cor:cdf-transform-compact}
	Proposition~\ref{prop:cdf-transform} is easily extended to continuous random variable with full support on a compact set in $\mathbb{R}$.
\end{corollary}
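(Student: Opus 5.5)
The plan is to reduce the compact-support case to the argument behind Proposition~\ref{prop:cdf-transform}. The only place where full support on $\mathbb{R}$ is used there is to guarantee that $F_Z$ is a continuous, strictly increasing bijection onto $(0,1)$. So I would first make precise what the corollary means: $Z$ has a continuous distribution whose support is a compact interval $[a,b]$ with $a<b$, and no proper subinterval of $[a,b]$ carries zero mass. In this setting $F_Z$ equals $0$ on $(-\infty,a]$, equals $1$ on $[b,\infty)$, and is continuous and strictly increasing on $[a,b]$.

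With this in hand, I would restrict attention to the open interval $(a,b)$. Continuity means there are no atoms, so $\mathbb{P}(Z\in\{a,b\})=0$ and hence $Z\in(a,b)$ almost surely. On $(a,b)$, the restriction $F_Z|_{(a,b)}$ is a continuous strictly increasing bijection onto $(0,1)$. This follows from the intermediate value theorem, together with the limits $F_Z(a^+)=0$ and $F_Z(b^-)=1$, both of which come from continuity. Its inverse $F_Z^{-1}:(0,1)\to(a,b)$ is therefore well defined and increasing.

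For $u\in(0,1)$, I would then compute
\[
\mathbb{P}(F_Z(Z)\le u)=\mathbb{P}\bigl(Z\le F_Z^{-1}(u)\bigr)=F_Z\bigl(F_Z^{-1}(u)\bigr)=u.
\]
The endpoints and the values outside $(0,1)$ are null events, so $U=F_Z(Z)$ is uniform on $(0,1)$. This is exactly the conclusion of Proposition~\ref{prop:cdf-transform}, applied to the variable $Z$ on the probability-one event $\{Z\in(a,b)\}$.

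The main obstacle is the interpretation of ``full support on a compact set''. If the support were a compact set with gaps, such as a union of disjoint intervals, $F_Z$ would be flat on each gap and so not injective, and the inverse above would fail. I would handle this in one of two ways. The first option is to assume the compact set is an interval, which is the natural reading. The second, and more general, option is to replace $F_Z^{-1}$ by the generalized inverse $F_Z^{-}(u)=\inf\{z:F_Z(z)\ge u\}$ and use the equivalence $F_Z(z)\ge u \iff z\ge F_Z^{-}(u)$. Together with continuity of $F_Z$, this still gives $\mathbb{P}(F_Z(Z)\le u)=u$, since flat pieces of $F_Z$ carry zero probability.
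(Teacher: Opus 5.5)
Your proposal is correct and follows the paper's route: the paper restricts the domain so that $F_Z$ becomes invertible and then reruns the computation $\mathbb{P}(F_Z(Z)\le u)=F_Z(F_Z^{-1}(u))=u$ from Proposition~\ref{prop:cdf-transform}, which is exactly your argument on $(a,b)$, with the strict monotonicity, the endpoint limits and the null endpoint events written out. Your generalized-inverse remark for compact supports with gaps goes beyond what the paper addresses, and it is also sound.
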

Proposition~\ref{prop:cdf-transform} entails that, if $y|\bm{x}; \bm{\theta}$ with cumulative distribution function $F(\bullet |\bm{x}; \bm{\theta})$ describes a continuous random variable with full support, then
$u = F(y|\bm{x}; \bm{\theta})$ is uniformly distributed.
As will become clear in the following, extremes values for $u$ indicate very unlikely observations $y$ according to our model $y|\bm{x}; \bm{\theta}$.
However, our hypothesis is that malfunctions have a progressive onset and become more and more visible as a failure approaches.
In other words,
we aim to obtain similar results on entire sequences of observations $\mathcal{P}_t^{k}$,
without neglecting the ordering of observations.
A first step in this direction is conveyed by the following Lemma.
\begin{lemma}\label{lemma:cond-indep-of-u}
	Let $y_1 | \bm{x}_1; \bm{\theta} \sim p(\bullet | \bm{x}_1; \bm{\theta})$, $y_2 | \bm{x}_2; \bm{\theta} \sim p(\bullet | \bm{x}_2; \bm{\theta})$ be continuous random variables, with $y_1$ and $y_2$ independent given $\bm{x}_1, \bm{x}_2, \bm{\theta}$.
	Then, $u_1 = F(y_1|\bm{x}_1; \bm{\theta})$ and $u_2 = F(y_2|\bm{x}_2; \bm{\theta})$ are independent conditionally on $\bm{x}_1, \bm{x}_2, \bm{\theta}$.
\end{lemma}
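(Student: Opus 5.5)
The plan is to show that, conditionally on $\bm{x}_1, \bm{x}_2, \bm{\theta}$, the pair $(u_1, u_2)$ is obtained by applying measurable maps separately to $y_1$ and $y_2$. Independence of measurable functions of independent random variables then gives the result. Concretely, I fix $\bm{x}_1, \bm{x}_2, \bm{\theta}$ and define $g_j(y) = F(y|\bm{x}_j; \bm{\theta})$ for $j = 1, 2$. Each $g_j$ depends only on $y$, because the covariates and parameters are held fixed by the conditioning. Each $g_j$ is also Borel measurable, being a cumulative distribution function (monotone non-decreasing and right-continuous; continuous here, since $y_j$ is continuous). Hence $u_j = g_j(y_j)$ is a measurable transformation of $y_j$ alone.

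The core step is to verify the factorisation of the conditional joint law. For Borel sets $A, B \subseteq (0,1)$, I would write
\begin{equation*}
\mathbb{P}\left(u_1 \in A, u_2 \in B \,|\, \bm{x}_1, \bm{x}_2, \bm{\theta}\right) = \mathbb{P}\left(y_1 \in g_1^{-1}(A),\, y_2 \in g_2^{-1}(B) \,|\, \bm{x}_1, \bm{x}_2, \bm{\theta}\right).
\end{equation*}
The preimages $g_1^{-1}(A)$ and $g_2^{-1}(B)$ are Borel sets, so the assumed conditional independence of $y_1, y_2$ splits the right-hand side into the product of the two conditional probabilities. Converting each factor back gives
\begin{equation*}
\mathbb{P}\left(u_1 \in A \,|\, \bm{x}_1, \bm{x}_2, \bm{\theta}\right)\, \mathbb{P}\left(u_2 \in B \,|\, \bm{x}_1, \bm{x}_2, \bm{\theta}\right),
\end{equation*}
which is the claimed conditional independence. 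It suffices to check this for sets of the form $A = (0, a]$, $B = (0, b]$, since these generate the Borel $\sigma$-algebra and form a $\pi$-system.

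The only point requiring care, and the step I expect to be the main obstacle, is making rigorous that $g_j$ may be treated as a fixed deterministic function under the conditioning. Formally, $u_j = F(y_j|\bm{x}_j;\bm{\theta})$ is a joint function of $(y_j, \bm{x}_j, \bm{\theta})$. I would handle this by working with regular conditional distributions given $(\bm{x}_1, \bm{x}_2, \bm{\theta})$, where these variables are fixed values. Alternatively, I would invoke the standard fact that if $y_1 \perp y_2 \mid W$, then $h_1(y_1, W) \perp h_2(y_2, W) \mid W$ for jointly measurable $h_1, h_2$, with $W = (\bm{x}_1, \bm{x}_2, \bm{\theta})$. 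Joint measurability of $(y, \bm{x}, \bm{\theta}) \mapsto F(y|\bm{x};\bm{\theta})$ holds for the CoCoAFusE likelihood~\eqref{eq:cocoafuse-likelihood}, since it is a finite mixture of Gaussian CDFs with continuous dependence on $\bm{x}$ and $\bm{\theta}$.

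Finally, I would remark that by Proposition~\ref{prop:cdf-transform} each $u_j$ is moreover uniform on $(0,1)$ conditionally. Combined with the independence just shown, $(u_1, u_2)$ is therefore uniform on $(0,1)^2$ given $\bm{x}_1, \bm{x}_2, \bm{\theta}$. This is the form in which the lemma will be used when building $Q(\mathcal{P}_t^k;\bm{\theta})$.
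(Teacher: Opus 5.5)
Your proposal is correct and uses the same idea as the paper's proof. Each $u_j$ is a Borel-measurable function of $y_j$, so $\sigma(u_j)\subseteq\sigma(y_j)$ and the conditional independence is inherited; your preimage factorisation simply writes this out. Your extra care about $F(\cdot\,|\,\bm{x}_j;\bm{\theta})$ depending on the conditioning variables (via $h_1(y_1,W)\perp h_2(y_2,W)\mid W$) addresses a point the paper's one-line argument glosses over, and the joint measurability you need in fact holds for any conditional CDF (right-continuous in $y$, measurable in $(\bm{x},\bm{\theta})$), not only for the CoCoAFusE likelihood.
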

\begin{remark}
	Lemma~\ref{lemma:cond-indep-of-u} can be immediately extended to any number of conditionally independent responses given the covariates and parameters.
    Notice that all of the previous assumptions are postulated by our choice for the density model (see 
    Assumption~\ref{ass:conditional-independence}),
    provided that it amounts to a valid continuous distribution on $\mathbb{R}$.
\end{remark}
Assuming knowledge of $F(\bullet|\bm{x}_s; \bm{\theta})$, which is directly encoded by our model, we can define a sequence of (conditionally) independently-distributed random variables
$\mathcal{U}_t^k = \{u_s = F(y_s|\bm{x}_s; \bm{\theta})\}_{s=t - k}^{t}$,
corresponding to each element in $\mathcal{P}_t^{k}$,
as per Proposition~\ref{prop:cdf-transform}.
Although each $u_s$ informs us of how unlikely a pair $(y_s, \bm{x}_s)$ is, we wish to convey a similar information on the entire collection $\mathcal{P}_t^{k}$.
At the same time, we want to give as much importance as possible to the most recent observations, so that the delay between the recording of an anomalous observation and the detection of a fault is minimal.
\begin{lemma}[\citet{1979barrowSplineNotationApplied}]\label{lemma:barrow79}
    If the $\mathcal{U}_t^k$ is a family of independent identically-distributed variables, each with uniform distribution on $(0, 1)$, then
    \begin{equation}\label{eq:q-quantity}
        \textstyle Q_t^k = \sum_{s=t - k}^{t} w_s u_s,
    \end{equation}
    where $W = \{w_s\}_{s=t - k}^{t}$ are non-negative and sum to 1, follows a continuous distribution with cumulative function given by:
    \begin{equation}\label{eq:weighted-sum-cdf}
        \textstyle F_W(q) = {\sum_{W^\prime \in \mathcal{W}}(-1)^{\left|W^\prime\right|}\left(q - \sum_{w\in W^\prime} w \right)_{+}^k}/\left({k!\prod_{s}w_s}\right),
    \end{equation}
    where $\mathcal{W}$ is the power set of $W$ and $\left(\bullet\right)_{+}$ is the positive part operator (maximum between argument and 0).
\end{lemma}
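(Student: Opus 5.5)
The plan is to compute the distribution of $Q_t^k$ by reducing it to a sum of independent uniforms on intervals of different lengths, and then to obtain its cumulative function by inclusion–exclusion over the box $\prod_s[0,w_s]$. Throughout we assume all $w_s>0$, since otherwise the denominator $\prod_s w_s$ in~\eqref{eq:weighted-sum-cdf} vanishes. The degenerate case of a zero weight is handled by dropping that index and lowering the exponent accordingly.

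\textbf{Change of variables.} Set $v_s = w_s u_s$. By independence, the vector $(v_{t-k},\dots,v_t)$ is uniform on the box $B=\prod_s[0,w_s]$, with constant density $1/\prod_s w_s$. There are $k+1$ coordinates. Hence
\[
F_W(q)=\mathbb{P}\Big(\textstyle\sum_s v_s\le q\Big)=\frac{\mathrm{vol}\big(B\cap\{v\ge 0,\ \sum_s v_s\le q\}\big)}{\prod_s w_s}.
\]
The question therefore becomes a purely geometric one: compute the volume of a simplex truncated by a box.

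\textbf{Inclusion–exclusion on the box constraints.} Let $S_q=\{v\in\mathbb{R}^{k+1}_{\ge0}:\sum_s v_s\le q\}$, whose volume is $q^{k+1}/(k+1)!$. For each subset $W'$ of indices, let $A_{W'}$ be the set of points of $S_q$ with $v_s>w_s$ for all $s\in W'$. Then inclusion–exclusion gives
\[
\mathrm{vol}(B\cap S_q)=\sum_{W'}(-1)^{|W'|}\,\mathrm{vol}(A_{W'}).
\]
To evaluate $\mathrm{vol}(A_{W'})$, substitute $v_s\mapsto v_s-w_s$ for $s\in W'$. This is a translation, so it preserves volume, and it maps $A_{W'}$ onto a standard simplex $S_{q-\sum_{w\in W'}w}$. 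Its volume is therefore $(q-\sum_{W'}w)_+^{k+1}/(k+1)!$, and the positive part correctly covers the empty case.

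\textbf{Reconciling the exponent.} The argument above yields the CDF with exponent $k+1$ and factorial $(k+1)!$. The displayed formula~\eqref{eq:weighted-sum-cdf}, with exponent $k$ and $k!$, is instead its derivative in $q$, i.e.\ the density of $Q_t^k$. Equivalently, it matches the CDF formula if the window is read as containing $k$ variables rather than $k+1$.

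\textbf{Main obstacle.} The inclusion–exclusion and translation steps are routine. The genuinely delicate point is this indexing and normalization bookkeeping. I would present the volume identity in a general $m$-variable form and then specialize it to the window. I would also flag explicitly that the formula as displayed in~\eqref{eq:weighted-sum-cdf} is the density unless the window size is taken to be $k$.

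Continuity of the distribution follows from the fact that the volume expression is a continuous piecewise polynomial in $q$. Moreover, at $q=1=\sum_s w_s$ every nonempty $W'$ contributes terms whose alternating sum cancels, so that $F_W(1)=1$. This serves as a sanity check via the identity $\sum_{j}(-1)^j\binom{m}{j}(\cdot)^m$ for the equal-weight case.
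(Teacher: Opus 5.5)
Your proof is correct for the cumulative function, and your flag on the exponent is right. The paper gives no argument for this lemma: it is imported from \citet{1979barrowSplineNotationApplied}, and the appendix proves only the surrounding results. So yours is an independent, self-contained route. Passing to $v_s=w_su_s$, uniform on $\prod_s[0,w_s]$, and running inclusion--exclusion over the events $\{v_s>w_s\}$ needs nothing beyond translation invariance of volume. Indexing $\mathcal{W}$ by subsets of indices rather than of weight values, as you do, is also the correct reading when weights coincide.

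In the truncated-power notation of the cited note, the same result is the iterated difference $F_W=\frac{1}{\prod_s w_s}\bigl(\prod_s\nabla_{w_s}\bigr)\frac{q_+^{m}}{m!}$, with $\nabla_h f(q)=f(q)-f(q-h)$ and $m$ the number of summands. That form yields density and cumulative function at once, by changing only the power being differenced. Your geometric version instead keeps the dimension count in plain view, which is exactly where the displayed formula slips.

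With the $k+1$ summands $s=t-k,\dots,t$, the cumulative function carries $(\cdot)_+^{k+1}/(k+1)!$, and \eqref{eq:weighted-sum-cdf} is its derivative. For example, with $k=1$ and $w_{t-1}=w_t=\tfrac12$ it gives $F_W(\tfrac12)=\tfrac12/\tfrac14=2$, which is the peak of the triangular density, not a probability. The paper's later remark about $2^k$ subset sums shows that $|W|=k$ was intended. The fix is therefore to re-index the window as $s=t-k+1,\dots,t$, or to raise the exponent. As written, $F_W(Q_t^k)$ would not be uniform, and Corollary~\ref{cor:distribution-AS} would not follow.

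There is one small slip in your sanity check. At $q=1$ the nonempty-$W'$ terms do not cancel: for $m=2$ and $w=(\tfrac12,\tfrac12)$ they sum to $-\tfrac12$. What does hold is that $1-\sum_{w\in W'}w=\sum_{s\notin W'}w_s$, which turns the numerator into $\sum_{T}(-1)^{m-|T|}\bigl(\sum_{s\in T}w_s\bigr)^m=m!\prod_s w_s$. More simply, $F_W(1)=1$ because $Q_t^k\le\sum_s w_s=1$ almost surely.
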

Owing to Lemma~\ref{lemma:barrow79}, we can set
\begin{equation}\label{eq:Q-def}
    Q\left(\mathcal{P}_t^{k}; \bm{\theta}\right) = F_W(Q_t^{k}).
\end{equation}
\begin{remark}
	We choose the expression~\eqref{eq:q-quantity} for $Q_t^k$ as it allows to weight unequally the values $\mathcal{U}_t^k$.
	The weights $W = \{w_t\}_{t=1}^T$ are chosen to be proportional to an exponential function of the lag since the most recent observation, namely $w_t \propto e^{-\lambda (T - t)}$,
	with $\lambda > 0$ a weight-decay hyperparameter, for $t = 1,\dots,T$.
\end{remark}
\begin{remark}[Reducing the Computational Complexity of~\eqref{eq:weighted-sum-cdf}]
    Although the sum morally involves $2^k$ terms at every evaluation, we can reduce the computational overhead of queries by precomputing and caching the subset sums $\mathcal{S} = \{s = \sum_{w\in W}w, W\in \mathcal{W}\}$ and subset cardinalities $\mathcal{C} = \{c = |W|, W\in \mathcal{W}\}$, which only takes milliseconds on modern machines for relatively small $k$ (e.g., lower than 15).
    Further, let $\mathcal{S} = \{s_i\}_{i=1}^{2^k}$ and $\mathcal{C} = \{c_i\}_{i=1}^{2^k}$ be indexed in such a way that the $s_i$ are non-decreasing.
    Thus, \eqref{eq:weighted-sum-cdf}~becomes
    \begin{equation}\label{eq:weighted-sum-cdf-2}
        \textstyle F_W(q) = {\sum_{i : s_i < q}(-1)^{c_i}\left(q - s_i \right)^k}/{k!\prod_{s}w_s}.
    \end{equation}
    On the average case (depending on the $w_t$),~\eqref{eq:weighted-sum-cdf-2} involves fewer terms than $2^k$.
\end{remark}
In light of Corollary~\ref{cor:cdf-transform-compact},
$Q\left(\mathcal{P}_t^{k}; \bm{\theta}\right)$ satisfies the property~\eqref{eq:F-function-property} when the data $\mathcal{P}_t^{k}$ is generated by the model with parameters $\bm{\theta}$.
Thus, $Q\left(\mathcal{P}_t^{k}; \bm{\theta}\right)$ is uniformly distributed if the $\mathcal{P}_t^{k}$ follows the model specified by $\bm{\theta}$.
We can now define:
\begin{equation}\label{eq:anomaly-score-theta}
    \textstyle AS_{\bm{\theta}}\left(\mathcal{P}_t^{k}\right)
    \textstyle =\textstyle 1 - 2\min\left\{ Q\left(\mathcal{P}_t^{k}; \bm{\theta}\right), 1 - Q\left(\mathcal{P}_t^{k}; \bm{\theta}\right) \right\}.
\end{equation}
Because of our choice for $Q\left(\mathcal{P}_t^{k}; \bm{\theta}\right)$ (see~\eqref{eq:Q-def}), if $F_W(Q_t^{k})$ approaches $0$ or $1$ (meaning that $Q_t^{k}$ lies on one of the distribution's tails), then $AS_{\bm{\theta}}\left(\mathcal{P}_t^{k}\right)$ approaches $1$.
Hence, large values of $AS_{\bm{\theta}}\left(\mathcal{P}_t^{k}\right)$ indicate extreme values for $Q_t^{k}$, as intended.
The following Lemma characterizes the \textit{frequency} at which this happens, when $\mathcal{P}_t^{k}$ follows a model with parameters $\bm{\theta}$.
\begin{lemma}\label{lemma:distribution-of-min-uniforms}
	Let $U$ be uniform on $(0, 1)$. Then, $U^\prime = 1 - 2\min\left\{U, 1-U\right\}$
	is uniformly distributed on $(0, 1)$.
\end{lemma}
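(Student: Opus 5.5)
The plan is to compute the cumulative distribution function of $U^\prime$ directly and check that it coincides with the uniform one on $(0,1)$. First observe that $\min\{U, 1-U\} = \tfrac12 - \left|U - \tfrac12\right|$, so that
\begin{equation*}
    U^\prime = 1 - 2\min\{U, 1-U\} = \left|2U - 1\right|.
\end{equation*}
This rewriting turns the statement into a claim about the absolute value of an affine transformation of $U$. Since $U \in (0,1)$, we have $2U - 1 \in (-1, 1)$, hence $U^\prime \in [0, 1)$. The single point $U = \tfrac12$ is a null event, so $U^\prime \in (0,1)$ almost surely.

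Next, set $V = 2U - 1$. An affine map of a uniform variable is uniform, so $V$ is uniform on $(-1,1)$ with density $\tfrac12$ there. For any $u \in (0,1)$,
\begin{equation*}
    \mathbb{P}(U^\prime \le u) = \mathbb{P}(-u \le V \le u) = \tfrac{1}{2}\cdot 2u = u .
\end{equation*}
For $u \le 0$ this probability is $0$, and for $u \ge 1$ it is $1$. This is exactly the cumulative distribution function of the uniform distribution on $(0,1)$, which concludes the argument.

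An equivalent route, which I would mention as a check, splits on the events $\{U \le \tfrac12\}$ and $\{U > \tfrac12\}$. On the first event $U^\prime = 1 - 2U$, and on the second $U^\prime = 2U - 1$. Each branch then contributes probability $u/2$ to the event $\{U^\prime \le u\}$: the first contributes $\mathbb{P}\left(\tfrac{1-u}{2} \le U \le \tfrac12\right)$ and the second contributes $\mathbb{P}\left(\tfrac12 < U \le \tfrac{1+u}{2}\right)$.

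There is no real obstacle here. The only points requiring care are the identity $\min\{U,1-U\} = \tfrac12 - |U - \tfrac12|$, which is immediate by checking the two cases $U \le \tfrac12$ and $U > \tfrac12$, and the handling of the boundary value $U^\prime = 0$, which has probability zero.
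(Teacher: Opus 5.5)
Your proof is correct and is essentially the paper's argument: both compute the CDF of $U^\prime$ directly. Your event $\{-u \le 2U-1 \le u\}$ is exactly the paper's event $\{\tfrac{1-u}{2} \le U \le \tfrac{1+u}{2}\}$; the paper reaches it by unpacking $\tfrac{1-u}{2} \le \min\{U,1-U\}$ rather than via the identity $U^\prime = |2U-1|$, and your explicit handling of $u \le 0$, $u \ge 1$ and the null event $U=\tfrac12$ is a minor completeness improvement the paper leaves implicit.
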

\begin{corollary}\label{cor:distribution-AS}
    $AS_{\bm{\theta}}\left(\mathcal{P}_t^{k}\right)$ is a uniform on $(0, 1)$, conditionally on $\bm{\theta}$.
\end{corollary}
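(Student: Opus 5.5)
The corollary follows by chaining the earlier results: Proposition~\ref{prop:cdf-transform}, Lemma~\ref{lemma:cond-indep-of-u}, Lemma~\ref{lemma:barrow79}, Corollary~\ref{cor:cdf-transform-compact} and Lemma~\ref{lemma:distribution-of-min-uniforms}. Throughout we condition on $\bm{\theta}$ and on the covariates $\bm{x}_{t-k},\dots,\bm{x}_t$, and assume the window $\mathcal{P}_t^{k}$ is generated by the model.

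\textbf{Step 1: uniform independent transforms.} For each $s$, $u_s = F(y_s|\bm{x}_s;\bm{\theta})$ is uniform on $(0,1)$ by Proposition~\ref{prop:cdf-transform}. By Lemma~\ref{lemma:cond-indep-of-u} and the remark after it, the $u_s$ are mutually independent. Hence $\mathcal{U}_t^k$ is an i.i.d.\ family of uniforms. Its conditional law does not depend on the covariates, so we can later integrate them out and keep only the conditioning on $\bm{\theta}$.

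\textbf{Step 2: uniformity of $Q$.} Lemma~\ref{lemma:barrow79} now applies. $Q_t^k$ is a continuous random variable with CDF $F_W$, supported on the compact interval $[\min_s w_s, \max_s w_s]\subseteq[0,1]$. By Corollary~\ref{cor:cdf-transform-compact}, $Q(\mathcal{P}_t^k;\bm{\theta}) = F_W(Q_t^k)$ is uniform on $(0,1)$. This is property~\eqref{eq:F-function-property}.

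\textbf{Step 3: uniformity of $AS_{\bm{\theta}}$.} By definition~\eqref{eq:anomaly-score-theta}, $AS_{\bm{\theta}}(\mathcal{P}_t^k) = 1-2\min\{U,1-U\}$ with $U = Q(\mathcal{P}_t^k;\bm{\theta})$. Lemma~\ref{lemma:distribution-of-min-uniforms} then gives that $AS_{\bm{\theta}}$ is uniform on $(0,1)$ conditionally on $\bm{\theta}$.

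\textbf{Main obstacle.} The delicate point is Step 2. The corollary on compact support needs $F_W$ to be continuous and strictly increasing on the support, so that $P(F_W(Q_t^k)\le u)=u$. Continuity holds because the sum of independent uniforms has a density, at least when all $w_s>0$, as with the exponential weights. The strict increase is not actually needed: for any continuous CDF $F$ and $Z\sim F$, we have $P(F(Z)\le u)=u$, since $F(Z)\le u$ holds if and only if $Z\le F^{-1}(u)$ up to a null set. I would invoke this general form to close the step.
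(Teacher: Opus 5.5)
Your proof is correct and follows the paper's route. Uniformity of $Q(\mathcal{P}_t^{k};\bm{\theta})=F_W(Q_t^{k})$ comes from Lemma~\ref{lemma:barrow79} and Corollary~\ref{cor:cdf-transform-compact}, and Lemma~\ref{lemma:distribution-of-min-uniforms} then finishes; your Step~1 and your remark that only continuity of $F_W$ matters just make explicit what the paper leaves to the surrounding text.

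One harmless slip: the support of $Q_t^{k}=\sum_s w_s u_s$ is $[0,1]$ (let all $u_s\to 0$ or all $u_s\to 1$), not $[\min_s w_s,\max_s w_s]$, which would even collapse to a single point for equal weights. Since you only use compactness of the support and continuity of $F_W$, nothing else in the argument changes.
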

Corollary~\ref{cor:distribution-AS} entails that, under ``nominal'' conditions, a threshold $\tau\in (0, 1)$ on the $AS_{\bm{\theta}}\left(\mathcal{P}_t^{k}\right)$ should be exceeded on average (for repeated experiments under identical $\bm{\theta}$ and covariates $\bm{x}_t$) with probability $1-\tau$.
This legitimates using $AS_{\bm{\theta}}\left(\mathcal{P}_t^{k}\right)$ as a metric to highlight anomalous realizations of $\mathcal{P}_t^{k}$, e.g., if we see that a threshold $\tau$ is exceeded too often.
Since the model is uncertain even after fitting, we will in practice employ:
\begin{equation}\label{eq:anomaly-score-D}
    \textstyle AS_{\mathcal{D}}\left(\mathcal{P}_t^{k}\right) = \mathbb{E}_{\text{post}}\left[ AS_{\bm{\theta}}\left(\mathcal{P}_t^{k}\right) | \mathcal{D} \right],
\end{equation}
i.e., $AS_{\mathcal{D}}\left(\mathcal{P}_t^{k}\right)$, or the Anomaly Score on new data $\mathcal{P}_t^{k}$ upon having seen a dataset $\mathcal{D}$, is the posterior expectation of $AS_{\bm{\theta}}\left(\mathcal{P}_t^{k}\right)$.
Since~\eqref{eq:anomaly-score-D} cannot be computed analytically, we shall select apriori a confidence threshold $\tau$, and surrogate $AS_{\mathcal{D}}\left(\mathcal{P}_t^{k}\right)$ via a Monte Carlo based estimate $\widehat{AS_{\mathcal{D}}}\left(\mathcal{P}_t^{k}\right)$ on a parameter sample from the posterior distribution.
Then, the proxy for $AS_{\mathcal{D}}\left(\mathcal{P}_t^{k}\right)$ can be used to categorize as anomalous all (windows of) observations for which $\widehat{AS_{\mathcal{D}}}\left(\mathcal{P}_t^{k}\right) \ge \tau$.
Notice that, in~\eqref{eq:anomaly-score-D}, we select the posterior mean $\mathbb{E}_{post}[\bullet]$ for convenience, but other risk functionals could be adopted instead.

%% file: sections/05_model_evaluation.tex
\section{Selecting, Inspecting and Assessing CoCoAFusE Models}\label{sec:mod-eval}

A peculiarity of our approach is that we choose to tackle one Health Index $y_t$ at a time.
In doing so, we include all other HIs in the covariate vector $\bm{x}_t$.
This, which is not a constraint of CoCoAFusE, aids in explaining the effects of each covariate on a single Health Index, albeit sacrificing some of the information that is contained in the correlations between HIs.
The previous, however, also forces us to repeat prior elicitation and model evaluation multiple times,
which is impractical if done manually.
Hence, we propose the following autonomous hyperparameter selection procedure,
as well as qualitative tools enhancing transparency and quantitative measures to assess Fault Detection performance.

\subsection{Hyperparameter Tuning and Selection}\label{ssec:hpo}\label{sec:model-select}\label{ssec:fit-metrics}

When using CoCoAFusE, we aim at modeling the conditional target distribution given covariates. To asses this specific objective, we propose to use a few of metrics.
To evaluate the \textit{generalization} properties of the fitted model (i.e., posterior sample) on the test data, we estimate the Log Pointwise Predictive Density,
\begin{equation}\label{eq:LPPD}
    \textstyle \textup{LPPD} = \sum_{i=1}^N\ln \left(\int p({y}_i | \bm{\theta}) p(\bm{\theta} | \mathcal{D})d\theta\right),
\end{equation}
via Monte Carlo sampling. For model selection purposes, instead, we rely on the Pareto-smoothed importance sampling (PSIS) Leave-one-out (LOO) cross-validation introduced by~\citet{2017vehtariPracticalBayesianModel} evaluated on the \emph{training set} via the posterior sample. Since the PSIS-LOO is an estimator for the LPPD on unseen data, in both cases \emph{larger values} indicate \emph{better fit}.

To measure credibility coverage, i.e., how accurately the credibility bounds match with the observed responses, we instead consider
\begin{equation}\label{eq:CIC}
    \textstyle \textup{CIC}_{0.95} = \textstyle\sum_{t=1}^{N}\frac{\mathbb{P}_{p_{\bm{\lambda}}(\bm{\theta}|\mathcal{D})}\left[y_t \in \textup{CI}_{0.95}(\bm{x}_t; \bm{\theta})\right]}{N}
    \approx \textstyle\sum_{s=1}^{S}
    \frac{\mathds{1}\left\{y_t\in\textup{CI}_{0.95}\left(\bm{x}_t; \bm{\theta}^{(s)}\right)\right\}}{NS},
\end{equation}
namely the posterior 95\% Credible Interval Coverage (CIC) on a test dataset $\mathcal{D}$,
where $\textup{CI}_{0.95}(\bm{x}_t; \bm{\theta})$ is the 95\% Credible Interval evaluated at $\bm{x}_t$ given a set of model parameters $\bm{\theta}$.
Since the posterior density $p_{\bm{\lambda}}(\bm{\theta}|\mathcal{D})$ is not available in closed form, equation~\eqref{eq:CIC} approximates the $\textup{CIC}_{0.95}$ via Monte Carlo estimates from a posterior sample $\left\{\bm{\theta}^{(s)}\right\}_{s=1}^S$ of $p_{\bm{\lambda}}(\bm{\theta}|\mathcal{D})$.
Standard errors can also be attached to these estimates. Ideally, we expect the $\textup{CIC}_{0.95}$ to be \textit{very close} to the nominal value 0.95 with a small spread.

Since we intend to automate the fitting of conditional densities with CoCoAFusE, we must be able to effectively design suitable priors.
For our method to be effective in the downstream Fault Detection task (see Section~\ref{sec:anomaly-scores}),
we want to achieve a good balance between predictive capabilities, encoded by metrics such as~\eqref{eq:LPPD}, and coverage, akin to~\eqref{eq:CIC}.

The procedure for model selection should therefore achieve a suitable trade-off between the two, as discussed more in detail in~\ref{app:pareto-model-selection}.
In summary, we draw multiple configurations for the model in terms of hyperparameters, and follow the approach discussed in~\citet{2025raffaugoliniCoCoAFusEMixturesExperts} to refine them via Marginal Likelihood Maximization.
Since the Marginal Likelihood is in general non-convex,
after multiple models have been trained for different hyperparameter guesses,
we explore their Pareto frontier, ranked according to increasing fit and decreasing coverage (in this order). We accept a visited trial as new best if we can assume that it yields a better fit, at the expense of coverage, with a probability larger than a given threshold $\nu$. We empirically found that this procedure helps in selecting those trials where lower coverage is justified by substantial improvements in predictive fit.
\begin{remark}[Model selection and PSIS-LOO]
    In our implementation, we adopt PSIS-LOO on the Train set as the performance indicator guiding our choice of the model when using Algorithm~\ref{alg:pareto-model-selection} from~\ref{app:pareto-model-selection}.
\end{remark}

\subsection{Making CoCoAFusE Transparent to End Users}\label{sec:explainability}
As one of the key drivers of this work is the need for transparency in Predictive Maintenance, we structure the gates in CoCoAFusE to enable
inspection of the predictive machinery.
By setting $\Phi(\bm{x})^\top = \left[1, \bm{x}^\top\right]$,
in the parametrization, the mixing probabilities are the softmax transformation of the vector
\begin{equation}
    \textstyle \bm{v} := \bm{A}_{\mathcal{G}}\left[1, \boldsymbol{x}^\top \right]^\top = \boldsymbol{b} + \bm{A}_{0} \boldsymbol{x}.
\end{equation}
Without loss of generality, we can assume $\bm{A}_{\mathcal{G}} \in \mathbb{R}^{(M-1)\times (n + 1)}$ having one row per each expert except the last
in view of Remark~\ref{rem:mix-gate-param}.
Assuming momentarily that $\bm{A}_{0} = \left[\boldsymbol{a}_1 | \dots | \boldsymbol{a}_{M-1}\right]^\top \in \mathbb{R}^{(M-1)\times n}$ is fixed (i.e., deterministic) and full rank,
we can univocally define:
\begin{equation}\label{eq:a-star-directions}
    \boldsymbol{a}_i^\star =\textstyle \text{Proj}\left(\boldsymbol{a}_i, \text{Span}\left(\bm{A}_{0,-i}^\top\right)^\perp\right),
\end{equation}
where $\text{Span}(\bullet)$ indicates the subspace spanned by the columns of the argument, $\bm{A}_{0,-i} = \left[\boldsymbol{a}_1, \dots, \boldsymbol{a}_{i-1}, \boldsymbol{a}_{i+1}, \dots\right]^\top$, and $\text{Proj}(\bullet)$ is the projection of a vector onto a subspace.
Since $\bm{A}_{0}$ is assumedly full-rank, $\left\{\boldsymbol{a}_i^\star\right\}_{i=1}^{M-1}$ is also the basis of an $(M-1)$-dimensional subspace of $\mathbb{R}^n$.
Moving along each $\boldsymbol{a}_i^\star$ leads to increasing the score of expert $i$ without affecting the other experts'.
If $\bm{A}_{0}$ has only two rows, i.e. $M=3$, we can more easily define
\begin{equation}
    \bm{A}_{0}^\star = \left[\boldsymbol{a}_1^\star\vert\boldsymbol{a}_2^\star\right]^\top =
    \begin{bmatrix}
        1 & -\frac{\boldsymbol{a}_1\cdot \boldsymbol{a}_2}{\lVert\boldsymbol{a}_2\rVert^2}\\
        -\frac{\boldsymbol{a}_1\cdot \boldsymbol{a}_2}{\lVert\boldsymbol{a}_1\rVert^2} & 1
    \end{bmatrix}
    \left[\boldsymbol{a}_1\vert\boldsymbol{a}_2\right]^\top =: \bm{K}\ \bm{A}_{0}.
\end{equation}
Therefore, the rows of $\bm{A}_{0}^\star \in \mathbb{R}^{2 \times n}$ describe the ``effects'' of each feature on the scores of experts. Indeed, increasing by 1 feature $x_j$ without changing the others leads to an increase of $a_{1, j}^\star$ for the score of expert 1 and of $a_{2, j}^\star$ for the score of expert 2. We exploit this characteristic of $\bm{A}_{0}^\star$ to craft a visualization of the model's logic by observing the following.
Given a vector of scores $\tilde{\bm{v}}$, we can recover a fictitious point $\tilde{\bm{x}}$ such that $\bm{b} + \bm{A}_{0} \tilde{\bm{x}} = \tilde{\bm{v}}$ as
\begin{equation}
    \tilde{\bm{x}} = {{\bm{A}_{0}^\star}}^\dag (\tilde{\bm{v}} - \bm{b}) \in \text{Span}\left(\boldsymbol{a}_1^\star, \boldsymbol{a}_2^\star\right)\subset\mathbb{R}^{n}
\end{equation}
where the symbol $^\dag$ denotes the pseudo-inverse.
This consideration suggests a way to explore predictions
by representing them on the subspace spanned by the direction
$\boldsymbol{a}_i^\star$.
To allow for visualization, we focus on the case where $M\le3$ (to achieve a representation in $\mathbb{R}$ or $\mathbb{R}^2$),
but the same reasoning is applicable to larger $M$.
To do so, we first select a grid of scores from which we want to generate a ``map'' of the model. Such a map can be, e.g., a rectangle encompassing very high and low scores for the activation of each individual expert.
We then map the grid of scores onto the subspace $\text{Span}\left(\boldsymbol{a}_1^\star, \boldsymbol{a}_2^\star\right)$ to recover a fictitious dataset to be fed to the model.
\textit{Finally,} we display quantities of interest (e.g., posterior predictive mean and standard deviation, average or variance of experts activations, etc.)
back to the grid.
The benefit of this approach is that we can represent visually the effects of each input feature on both the expert selection and prediction.
Since $\bm{A}_{\mathcal{G}}$ is not a deterministic matrix, our explanations are in practice provided with respect to the \textit{posterior expectation} of the coefficient matrix.
An example of these visualizations can be found towards the end of Section~\ref{sec:experiments-helicopter}.
\begin{remark}
    When $M = 2$, we can augment the matrix ${\bm{A}_{\mathcal{G}}}$ (respectively, ${\bm{A}_{0}}$) with the coefficients from the behaviour gating and perform the same procedure, provided that the augmented matrix has full rank.
\end{remark}
\begin{remark}
    If $M > 3$, the subspace spanned by single-expert-score increasing direction has dimensionality larger than 2; for representation purposes, we can, e.g., first use SVD to approximate the posterior expected ${\bm{A}_{0}}$ with a matrix  $\tilde{{\bm{A}_{0}}}\in\mathbb{R}^{2\times n}$ and then perform the described procedure to obtain a 2D subspace in the input space $\mathbb{R}^{n}$. In this case, the new low-dimensional features do not represent single-expert activations but rather directions of increase for the most ``important'' combinations of experts, intended as those with the largest singular values of (the posterior expectation of) ${\bm{A}_{0}}$.
\end{remark}

\subsection{Assessing Fault Detection Capabilities}\label{ssec:fd-metrics}
A challenge in evaluating the Fault Detection capabilities of the proposed method lies in the fact that
we aim at \textit{anticipating} the fault events,
which however
start to become apparent, i.e., \textit{detectable}, at an unknown time preceding them.
If we do not take this fact into account, we
cannot distinguish if the algorithm
is anticipating a fault or yielding a false alarm.
To tackle this issue,
we consider a detection as successful if
a ``validity'' window of $w$ days prior to the fault
contains an alert.
On the remaining test days (i.e., outside of the validity window), we shall tally as many false alarms as days for which an alarm was raised.
Given a validity window $w$, we can thus define True Positives ($TP$), False Negatives ($FN$), False Positives ($FP$), and True Negatives ($TN$) as exposed in Table~\ref{tab:fd-glossary}.
\begin{table}[tb!]
    \begin{tabularx}{\textwidth}{l|X}
        \toprule

        Term & Definition\\
        \midrule
        Failure & A window of consecutive samples in which the machinery failed.\\ % our goal lies in raising an appropriate alert prior to its onset date \\
        Alert & A window of consecutive samples for which the detection algorithm raised a warning. \\
        Validity Window & All observations within a given number of days ($w$) prior to a fault; depending on the observed data, gaps may be present. \\
        True Positive ($TP$) & Number of failure instances for which at least an alert was active in the related validity window. \\
        False Negative ($FN$) & Number of failure instances for which no alert was raised in the related validity window\\
        False Positive ($FP$) & Number of recorded days outside validity windows, for which at least one alert was active. \\
        True Negative ($TN$) & Number of recorded days outside validity windows, for which no alert was active.\\
        \bottomrule
    \end{tabularx}
    \caption{Glossary of Definitions Relating to Failure Detection Metrics.}\label{tab:fd-glossary}
\end{table}
An illustrative example containing a tally of $TP$, $TN$, $FN$, and $FP$ is displayed on a fictitious sequence for $w=3$ in Figure~\ref{fig:fd-example}.
\begin{figure}[tb!]
    \includegraphics[width=\textwidth, trim={0 10cm 0 0}, clip]{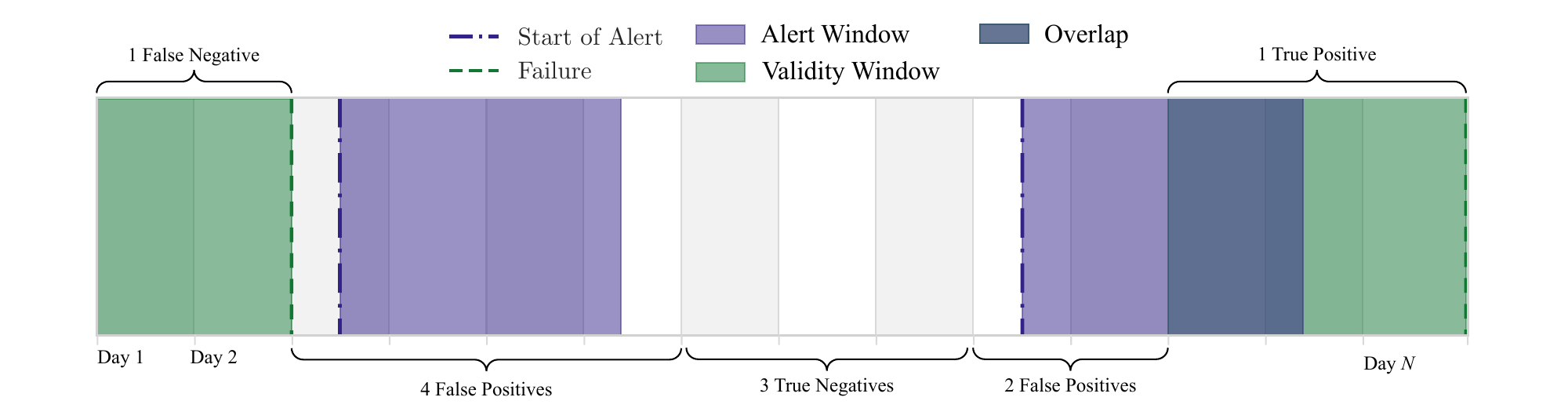}
    \caption{Illustrative example of Failure Detection-related concepts.}\label{fig:fd-example}
\end{figure}
We measure a model's performances via:
\begin{description}
    \item[Recall] The fraction of faults detected in the validity windows, $R = \frac{TP}{TP + FN}$;
    \item[Precision] The fraction of alerts that led to a detection, $P = \frac{TP}{TP + FP}$;
    \item[F1] The harmonic mean of precision and recall, i.e., $F = \frac{2RP}{R+P}$.
\end{description}
The ideal values for all metrics is 1, corresponding to perfect detection, whereas the worst value is 0 ($F = 0$ by convention if both $R$ and $P$ are zero).
As an example, the sequence in Figure~\ref{fig:fd-example} amounts to a Recall $R = 0.5$, a Precision $P = 0.1\bar{6}$, and an F1 of $F = 0.25$.
In our application, recall is more valuable than precision, as missed alarms are dangerous and costly.

%% file: sections/06_experiments.tex
\section{Experimental Validation on Machine Failure Data}\label{sec:experiments-azure}
In order to make our methodology replicable, we first apply it to a publicly available dataset.
The Azure Fault Detection Benchmark, first available as a part of Azure AI Notebooks for Predictive Maintenance, is an open dataset\footnote{\href{https://www.kaggle.com/datasets/arnabbiswas1/microsoft-azure-predictive-maintenance/data}{[Kaggle]~\texttt{microsoft-azure-predictive-maintenance}}} intended to support the development of predictive maintenance models by providing a structured collection of machine-related data.
The dataset also contains a comprehensive maintenance history, documented via timestamped instances of failures and repairs on specific machine components.
Telemetry data from sensors is recorded for a total of 100 machines, each identified by a unique machine identifier.
It consists primarily of sensor measurements that capture key performance parameters over time.
More specifically, the dataset includes time series collected at regular time intervals of four distinct signals: ``pressure'', ``rotate'', ``vibration'', and ``volt''.
The provided tabular files map telemetry readings to specific machines and timestamps, enabling temporal analysis of operational conditions.
We focus on the first machine (identified with label 1), and build the Train, Validation and Test sets as follows:
\begin{itemize}
	\item We extract all observations that are recorded no less than 5 days before and after each failure to create a fault-free Train and Validation Data;
	\item We subsample the fault-free data by taking 10\% of all observations (without altering their chronological order); this is due to the fact that the original data contains 876100 observations, which would make the computational complexity of performing full Bayesian inference practically infeasible;
	\item We select as Train set the first 200 samples of the resulting subsampled, fault-free data, and as Validation set the subsequent 100;
	\item Finally, we select all observations that were discarded in the first step which date after the last sample of the Validation set; we then keep all samples ranging from 5 days before to the date of each recorded failure.
\end{itemize}
While this procedure might seem cumbersome, it ensures that
we obtain Train and Validation sets that are as representative as possible as the fault- and wear-free condition of the machinery,
while the Test set contains a series of progressive onsets of the faults.
After data preprocessing (standard scaling of the indices), a distinct CoCoAFusE model is trained
on each of the aforementioned indices (``pressure'', ``rotate'', ``vibration'', and ``volt'') to describe the nominal conditional density of that HI with respect to the others.
As mentioned, our methodology does not preclude a multivariate response,
but for the sake of interpretability (see Section~\ref{sec:explainability})
we prefer to combine the outputs of each HI's analysis in a separate stage.
Figure~\ref{fig:azure-1-vibration} shows the Predictive Inference on the Train set for the Vibration Index, via:
\begin{description}
	\item[Top Subplot] The observations (black markers), the predictive density (shades of aqua superimposed by the mean as a line with dot markers on top and the 90\% intervals as simple lines of the same color);
	\item[Bottom Subplot] The gate probabilities for each Expert and the competitive Behaviour (hue shades corresponding to densities, superimposed by the mean as a line with dot markers on top and the 90\% intervals as plain lines, all color-coded to convey expert or behaviour pertinence).
\end{description}
\begin{figure}[tb!]
	\includegraphics[width=\textwidth]{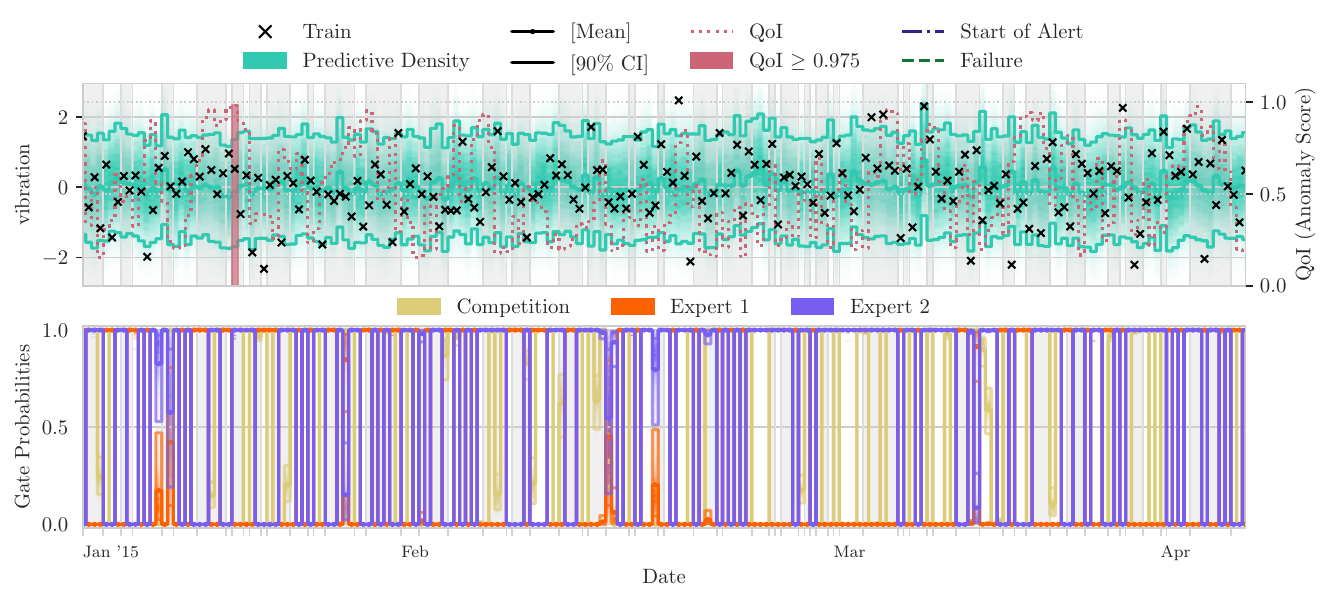}
	\caption{Predictive Inference for the Vibration Index of the Azure Train Dataset.}\label{fig:azure-1-vibration}
\end{figure}
We can observe how, under the assumed nominal conditions, the (normalized) Vibration samples appear to be concentrated around 0 with a consistent variance, encoded via CoCoAFusE by fusions of $M=2$ experts.

Each CoCoAFusE model can then be used to define an Anomaly Score as described in Section~\ref{sec:anomaly-scores} on windows of consecutive observations.
As mentioned, the $AS_{\mathcal{D}}$ measures how ``extreme'' a window of observations is, assuming that each response follows the conditional law postulated by the model.
By setting an
appropriate
threshold (we choose 97.5\%), it
is possible to
detect anomalies.
To further refine the detection logic and reduce false alarms, we implement a
filtering policy (``patience''), consisting of waiting
10 consecutive samples above threshold before raising an alarm,
which remains active while the threshold is exceeded.
We then evaluate the performance
in the Fault Detection task as described in Section~\ref{ssec:fd-metrics}.
The Anomaly Scores (dotted red lines, with blocked red overlay indicating threshold exceedance), Alarm onsets (blue dash-dotted vertical lines) for each Index, and the Recorded Failures (green dashed vertical lines, common to all Indices) can be seen for the Test set in Figure~\ref{fig:azure-1-test} (following the same convention as in Figure~\ref{fig:azure-1-vibration} concerning predictive inference), whereas a summary of the metrics on each Index are reported in Table~\ref{tab:azure-metrics}.%
\begin{figure}[t!]
	% trim={<left> <lower> <right> <upper>}
	\includegraphics[width=\textwidth, trim={-0.275cm 4.25cm 0 0}, clip]{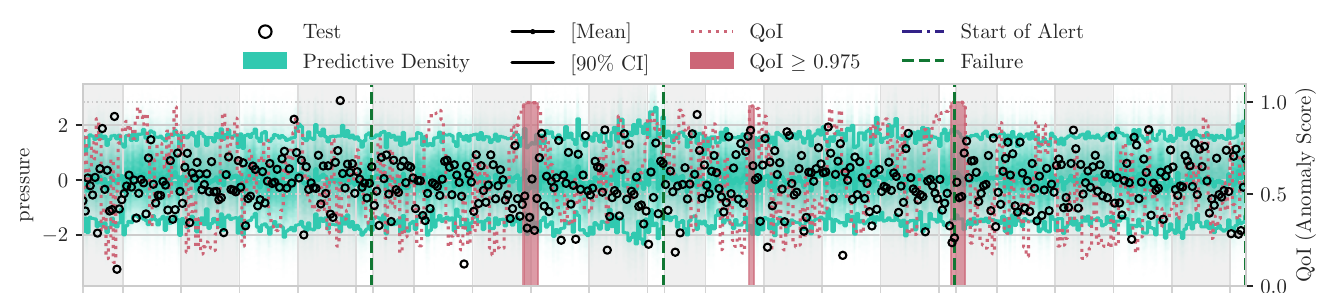}
	\includegraphics[width=\textwidth, trim={-0.275cm 4.25cm 0 1.3cm}, clip]{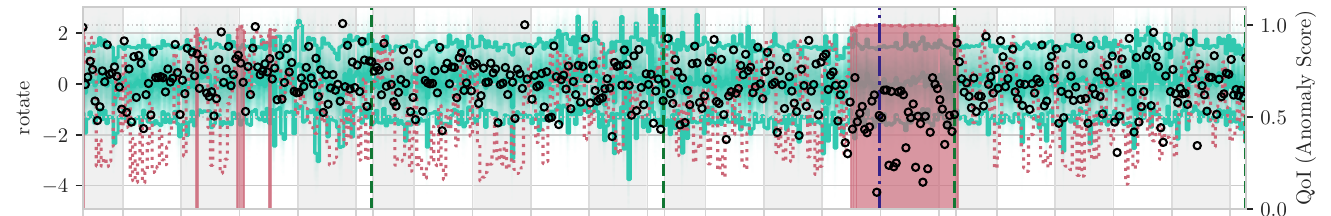}
	\includegraphics[width=\textwidth, trim={0 4.25cm 0 1.3cm}, clip]{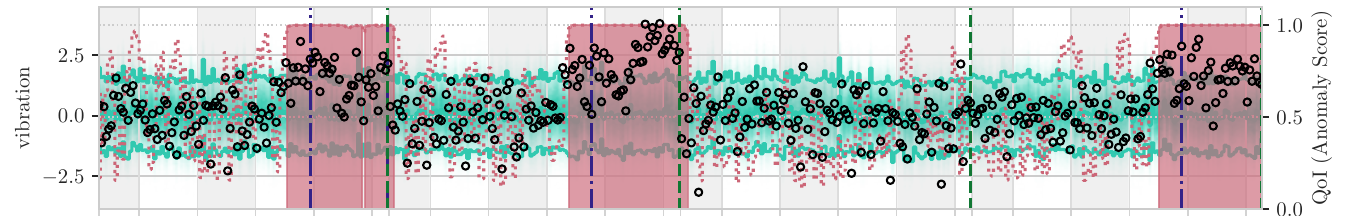}
	\includegraphics[width=\textwidth, trim={-0.275cm 3cm 0 1.3cm}, clip]{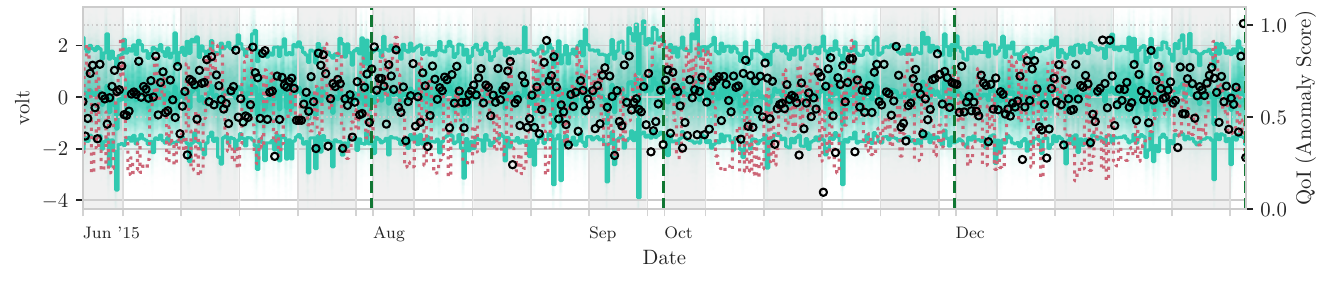}
	\caption{Predictive Inference Results for the Pressure (First Row), Rotate (Second Row), Vibration (Third Row), and Volt (Last Row) Indices of the Azure Test Dataset.}\label{fig:azure-1-test}
\end{figure}
\begin{table}[b!]
	\centering
	\input{tables/azure_1/merge/individual_indices.tex}
	\caption{Train and Test Metrics for each Index of the Azure Dataset.}\label{tab:azure-metrics}
\end{table}
Specifically, this Table reports the 95\% Credible Interval Coverage (CIC) on both the Train and Test sets, as well as the Precision, Recall and F1 of the Fault Detection on the Test Set. For the Fault Detection metrics, we consider the fault as detected if an alert was active in the day prior to it, whereas we consider an alarm as successful if it lasts at least until the day prior to a fault.
We observe that alerts occur more consistently in the very few days leading up to a fault, where the trends of the monitored indices significantly deviates from the expected behaviour (see Figure~\ref{fig:azure-1-test}).
The Rotate and Vibration indices, moreover, precisely complement each other in terms of the faults they are able to detect.
In practice, this could be the case for different types of fault, more easily detected via one Index compared to the others.

\paragraph{Pooled Detection}
We define a ``Pooled'' Anomaly Score as 1 if at least one Index exceeds the threshold, 0 otherwise.
Then, we apply the same detection logic as before, counting the consecutive samples for which the Pooled Anomaly Score evaluates to 1, with a patience of 10 samples prior to each Alert.
The Pooled Anomaly Score and new Alerts against the recorded faults in the Test set can be seen in Figure~\ref{fig:azure-1}, following the same convention as in the top subplot of Figure~\ref{fig:azure-1-vibration} (where, however, Test observations and Predictive Densities are omitted since multiple HIs are used).
\begin{figure}[tb!]
	\includegraphics[width=\textwidth, trim={0 3.25cm 0 0}, clip]{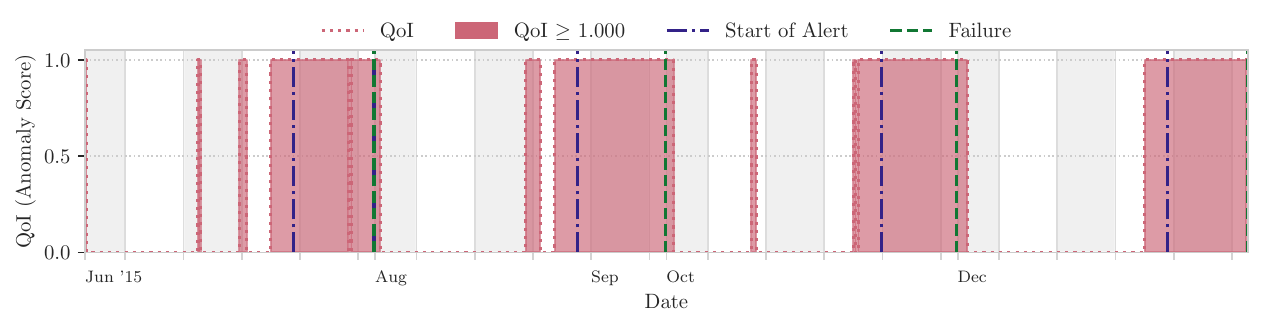}
	\caption{Anomaly Scores and Alerts on Pooled Indices of the Azure Dataset (Test Set).}\label{fig:azure-1}
\end{figure}
The consequent Fault Detection results, which are broken down according to progressive validity windows of length $w$ ranging from 1 (previous-day) to 4 days, are reported in Table~\ref{tab:azure-failure-detection}.
To enhance clarity, we group together consecutive days for which the metrics remain constant.
The number of samples that are specific to each validity window (or group thereof) are reported in the ``Samples in Range'' column.%
\begin{table}[b!]
	\centering
	\input{tables/azure_1/merge/pooled_indices.tex}
	\caption{Fault Detection Metrics for the Azure Test Dataset.}\label{tab:azure-failure-detection}
\end{table}

\section{Experimental Validation on Helicopter Transmission Data}\label{sec:experiments-helicopter}
We now move on to the actual Fault Detection applications of our methodology, analyzing datasets extracted from in-usage Helicopter machinery.
The datasets in this work, which have been previously analyzed in~\citet{2022leoniNewComprehensiveMonitoring}, were collected over the span of three years (March 2015 - December 2018) on two helicopters of the same family.
The onboard instrumentation, for each helicopter, records accelerometric signals from 23 accelerometers (of which 18 are monoaxial, 2 biaxial, and 3 triaxial) monitoring a total of 88 working components.
The raw data are collected at 40kHz, from which the onboard mission computers extract relevant Health Indices over the span of several minutes of recording.
Each accelerometer can sense the vibration of one or more components, and each accelerometer-component pairing (called ``acquisition identifier'') is associated with a minimum of 1 up to a maximum of 18 distinct types of HIs, relating to the type of fault condition that is being monitored.
Alongside the previous, the other variables in Table~\ref{tab:dataset-variables} are recorded with each instance.
\begin{table}[tb!]
	\centering
	\begin{tabularx}{\textwidth}{X|l}
		\toprule
		Variable & Details\\\midrule
        $[$ Alphanumeric Identifier $]$& Health Index, e.g., ``ENHKUR'', ``S1R''\\
		Engine 1 Torque & \\
		Engine 2 Torque & \\
		MGB 1 Oil Temperature & MGB: Main Gearbox Bearing \\
		MGB 2 Oil Temperature & \\
		Main Rotor Torque & \\
		Tail Rotor Torque & \\
		NR & NR: Rotor Speed\\
		IAS & IAS: Measured airspeed  \\
		Pitch & \\
		Roll & \\
		MGB Oil Pressure & \\
		MGB Oil Temperature& \\
		IGB Oil Temperature& IGB: Intermediate Gearbox Bearing\\
		TGB Oil Temperature& TGB: Rear Gearbox Bearing\\
		\bottomrule
	\end{tabularx}
	\caption{List of flight parameters in the datasets}\label{tab:dataset-variables}
\end{table}
The result is a large number of distinct HIs, which are univocally individuated by the accelerometer, target component, and Index type.
Each HI pertains to the time history of a specific acquisition identifier analyzed in one of the following ``acquisition modes''.
\begin{description}
	\item[Time Average:] time domain analyses to highlight improper meshing, distributed teeth surface degradation, fatigue cracks in gears and shafts.
	\item[Envelope:] localized pits, spalls, debris over inner/outer race energy components, rolling elements, and cage components.
	\item[Time History:] residual analyses to point out localized or distributed pits, spalls, or cracks over gears and bearings.
	\item[Auto Spectrum:] cepstrum analysis to identify localized or distributed pits, spalls, or cracks over outer gears and bearings.
\end{description}
As discussed in Section~\ref{sec:problem-statement}, we work on a per-index (hence also per-acquisition identifier) basis in order to obtain a conditional density on flight parameters.
Since we aim at characterizing the nominal behaviour of the indices, we will select an initial portion of faultless data (as provided by domain experts) to learn the conditional density.
We then use the remaining data to evaluate the Fault Detection algorithm based on the concepts from Section~\ref{sec:anomaly-scores}.

\subsection{First Helicopter: Swashplate Damage}\label{ssec:hc_3-26}
From domain knowledge, we know that the First Helicopter underwent a Swashplate fault discovered on January 29, 2017.
This fault has been studied previously in~\citet{2022leoniNewComprehensiveMonitoring}, whereby the detection is triggered by anomalous
patterns in the vibration time series related to a specific accelerometer-component pairing (labeled as acquisition identifier 26).
In an attempt to validate our method against an established baseline, which is a rarity in this context, we perform our analyses on the same HIs for said acquisition identifier (26) as in~\citet{2022leoniNewComprehensiveMonitoring}.
We build our training and validation sets as the pool of observations until October 31, 2016, i.e., roughly three months before the recorded fault.
Coherently, we use all observations from November 1, 2016, until January 29, 2017, as test dataset for the Fault Detection algorithm, considering as faulty all observations from the 7 days prior to the fault discovery.
Like in the previous case, a distinct CoCoAFusE model is trained on each HI related to accelerometer-component pairing 26 to describe the nominal conditional density of that HI with respect to other HIs, as well as additional flight parameters that were recorded (see Table~\ref{tab:dataset-variables}).
Figure~\ref{fig:HC_3-26-enhkur} depicts the Predictive Inference on one of the HIs (ENHKUR) on the different folds of the dataset, following the same conventions as in Figure~\ref{fig:azure-1-vibration}.
\begin{figure}[p!]
	\includegraphics[width=\textwidth]{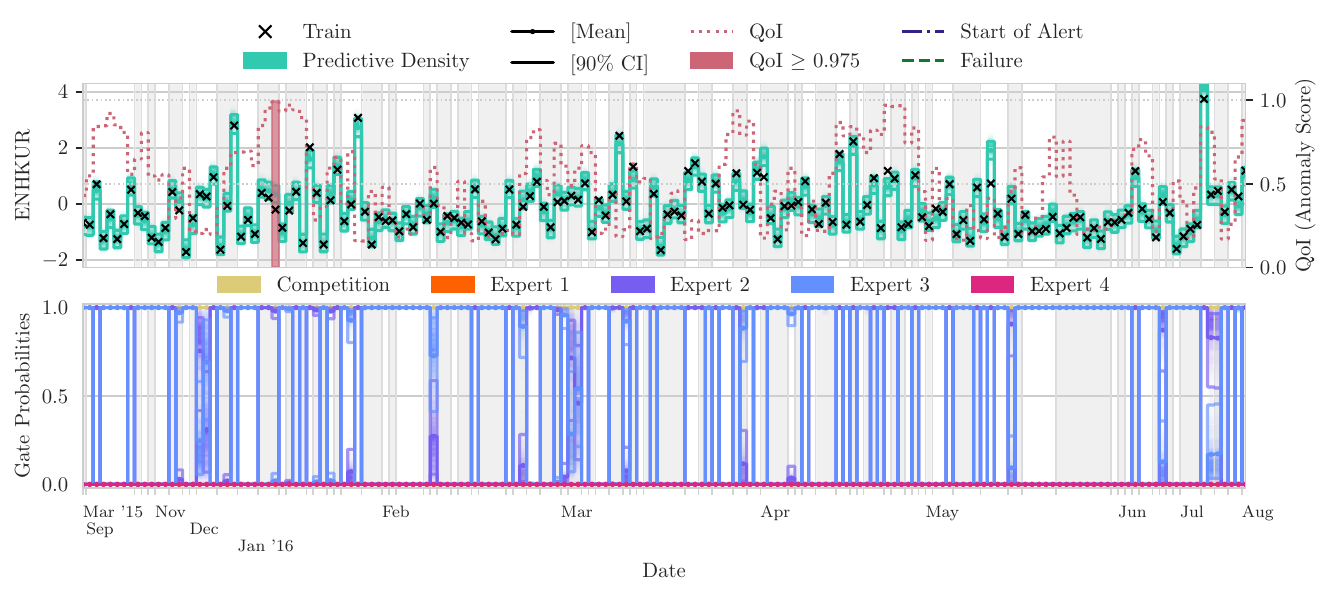}
	\includegraphics[width=\textwidth]{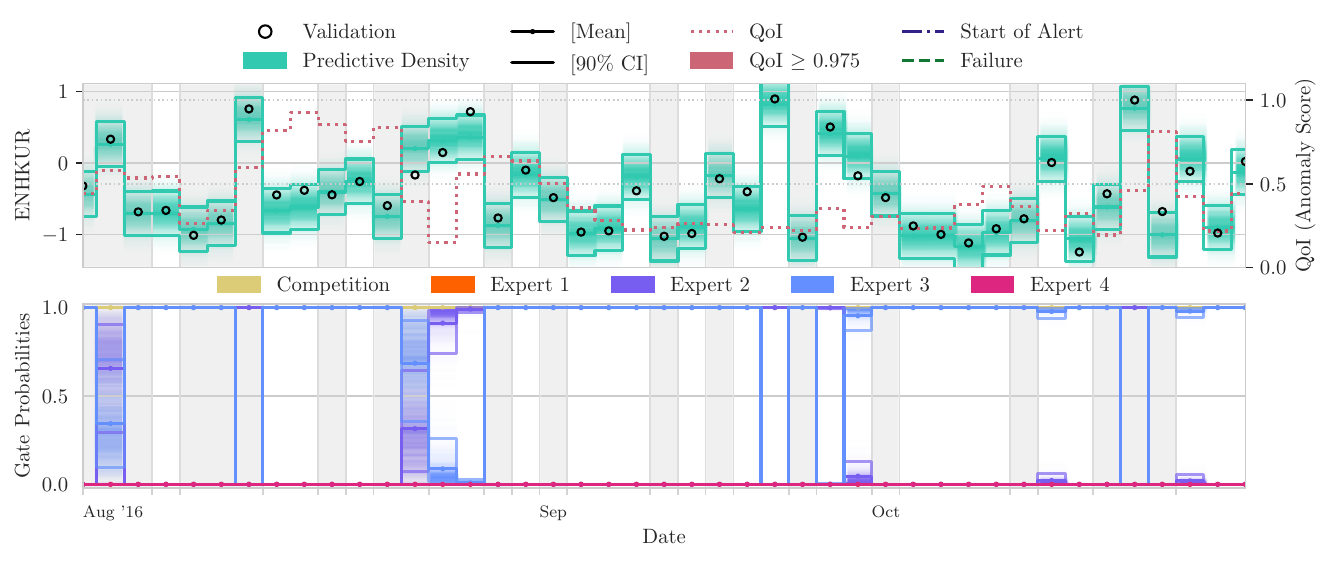}
	\includegraphics[width=\textwidth]{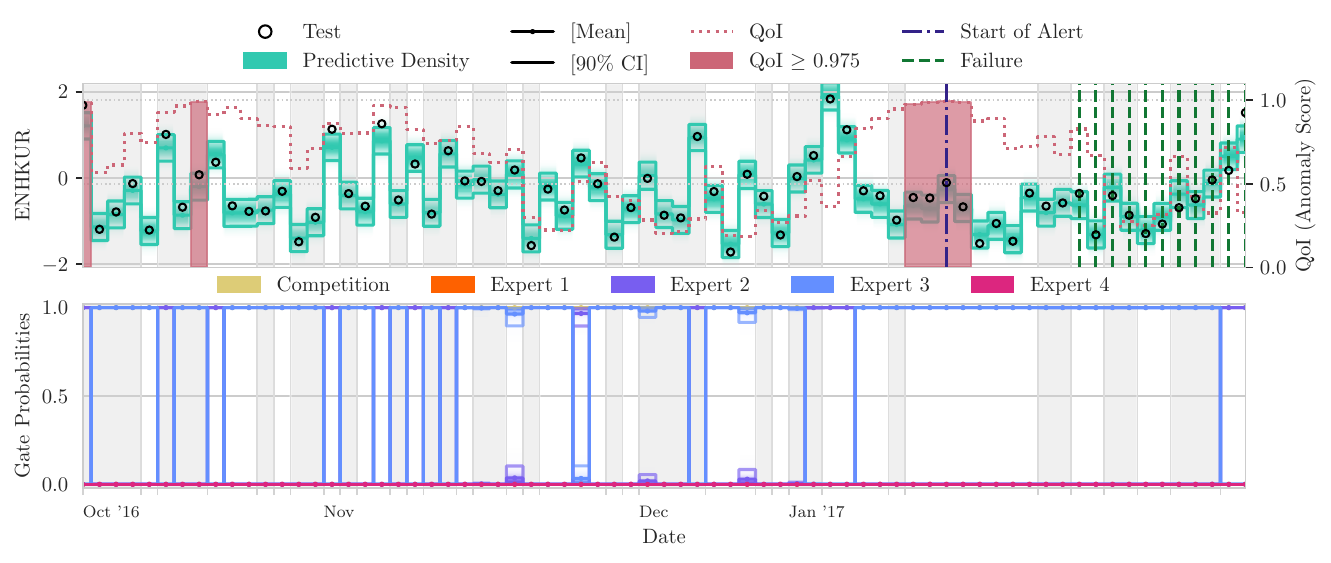}
	\caption{Predictive Inference for the ENHKUR Health Index of First Helicopter on the Train (top), Validation (middle), and Test (bottom) Sets.}\label{fig:HC_3-26-enhkur}
\end{figure}
Following the same procedure as for the Azure dataset, we define the Anomaly Scores from Section~\ref{sec:anomaly-scores} and set a threshold of 97.5\% to characterize anomalies.
Compared to the Azure case, we implement a less conservative 
patience filtering policy of 3 samples before raising alarms,
in light of the lower temporal resolution.

The comprehensive results for each HI are detailed in Table~\ref{tab:hc_3-26-metrics}, where we report the 95\% Credible Interval Coverage (CIC) on both the Train and Test sets, as well as the Precision, Recall and F1 of the Fault Detection on the Test Set (at a fixed validity window of $w = 60$ days prior to failure).
We can observe how two HIs allow to raise a successful alarm prior to the failure week, and how this is associated with a lower CIC on the test set, as expected.
\begin{table}[t!]
	\centering
	\input{tables/HC_3_26/merge/individual_indices.tex}
	\caption{Train and Test Metrics for each Health Index of the First Helicopter.}\label{tab:hc_3-26-metrics}
\end{table}%

\paragraph{Pooled Detection}
Also in this case, we combine the Anomaly Scores of different HIs, since it is not possible to dismiss underperforming HIs (see Table~\ref{tab:hc_3-26-metrics}) for unrecorded types of faults.
We set the Pooled Anomaly Score to 1 if at least two HIs exceed the respective threshold, and raise an Alarm after 3 consecutive samples with a Pooled score equal to 1.
The logic behind this choice is to gather a ``consensus'' from the HIs on the observations that might be anomalous, reducing false positives.
For the sake of presentation, we also set the Pooled Score to 0.5 if a single HI is above threshold, as shown in Figure~\ref{fig:HC_3-26}.
\begin{figure}[tb!]
    \includegraphics[width=\textwidth, trim={0 3cm 0 0}, clip]{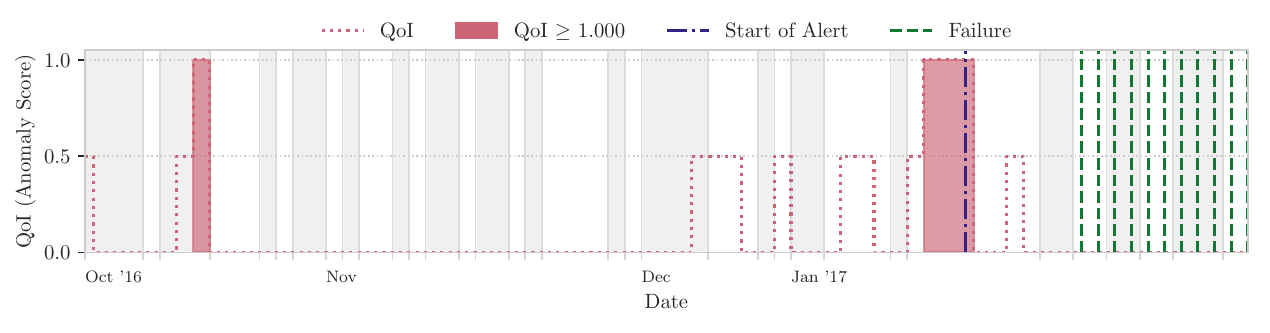}
    \caption{CoCoAFusE Anomaly Scores and Alerts on Pooled Health Indices of the First Helicopter (Test Set).}\label{fig:HC_3-26}
\end{figure}
The consequent Fault Detection results, now with validity windows ranging from 1 (previous-day) to 89, are reported in Table~\ref{tab:hc_3-26-failure-detection}.
The same metrics, computed on the algorithm presented in~\citet{2022leoniNewComprehensiveMonitoring}, are instead in table Table~\ref{tab:hc_3-26-failure-detection-jessica}.
Notice that some gaps in the data are present (e.g., 2 to 16 days prior to the fault), for which the metrics are not reported.
\begin{table}[tb!]
	\centering
	\input{tables/HC_3_26/merge/pooled_indices.tex}
	\caption{Fault Detection Metrics for the First Helicopter (CoCoAFusE).}\label{tab:hc_3-26-failure-detection}
\end{table}
\begin{table}[tb!]
	\centering
	\input{tables/HC_3_26/Jessica/pooled_indices.tex}
	\caption{Fault Detection Metrics for the First Helicopter (from~\citet{2022leoniNewComprehensiveMonitoring}).}\label{tab:hc_3-26-failure-detection-jessica}
\end{table}%
Also, the detection algorithm fails to raise an alert from the very day prior to the fault week, which explains the first row of Table~\ref{tab:hc_3-26-failure-detection}.

\subsection{Second Helicopter: Gear Bearing Fault}\label{ssec:69019-126}
The Second Helicopter in this study underwent a Gear Bearing fault discovered on May 25, 2018.
Again,
we analyze the
dataset
and correctly
identify the
reported
fault. In~\citet{2022leoniNewComprehensiveMonitoring},
the
acquisition
identifier
associated to the
detection
was 126.
Thus, we
focus on this
same
identifier.
We coherently
build the training and validation sets as the observations until December 31, 2017 (almost five months before the recorded fault).
We use all observations from January 1, 2018, until May 25, 2018, as test dataset for the Fault Detection algorithm, again considering as faulty all observations from the 7 days prior to the fault discovery.
We repeat verbatim the analysis as per the First Helicopter.
Predictive Inference on the OREB1A HI is portrayed in Figure~\ref{fig:69019-126-oreb1a}.
\begin{figure}[p!]
	\includegraphics[width=\textwidth]{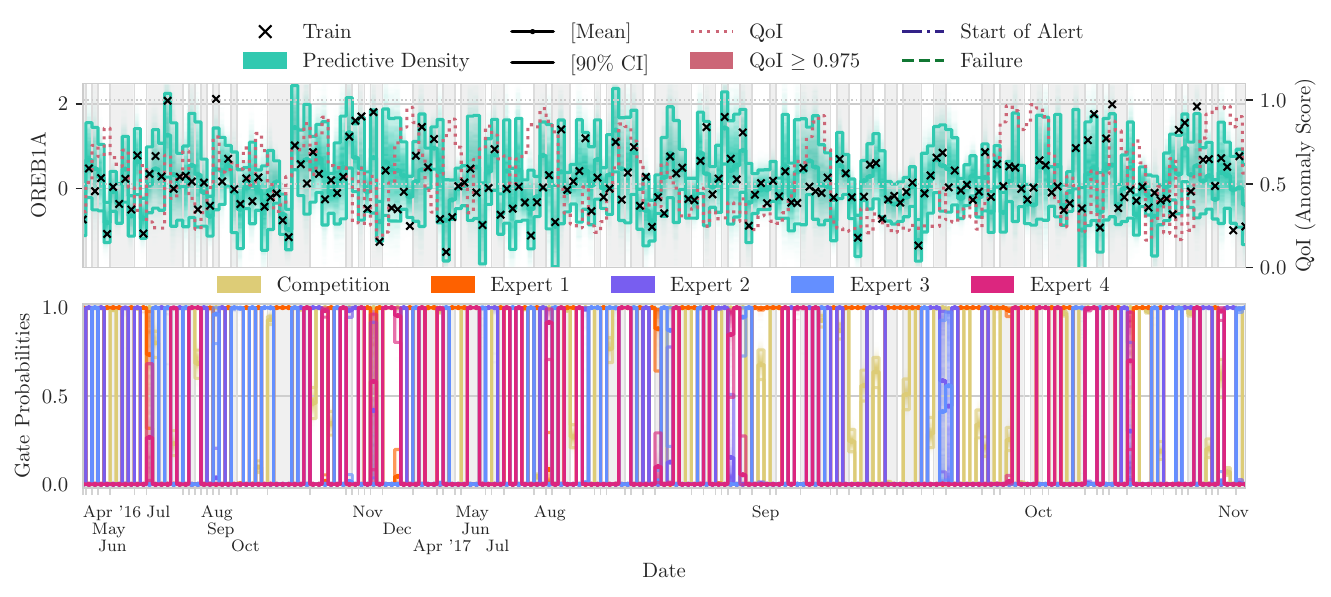}
	\includegraphics[width=\textwidth]{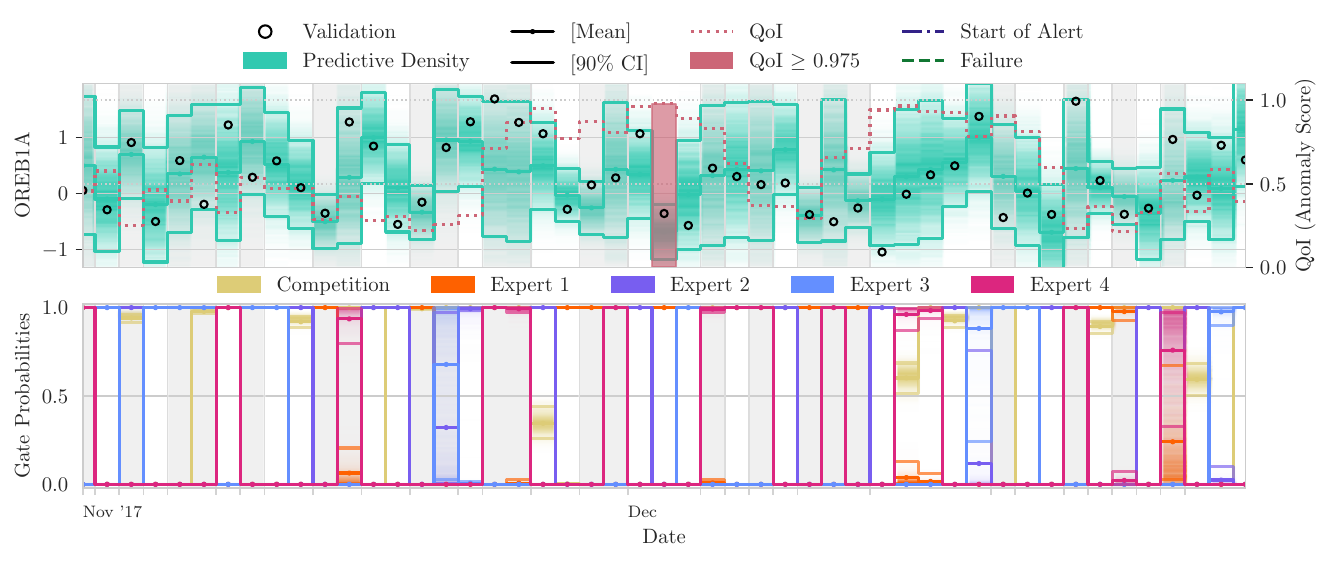}
	\includegraphics[width=\textwidth]{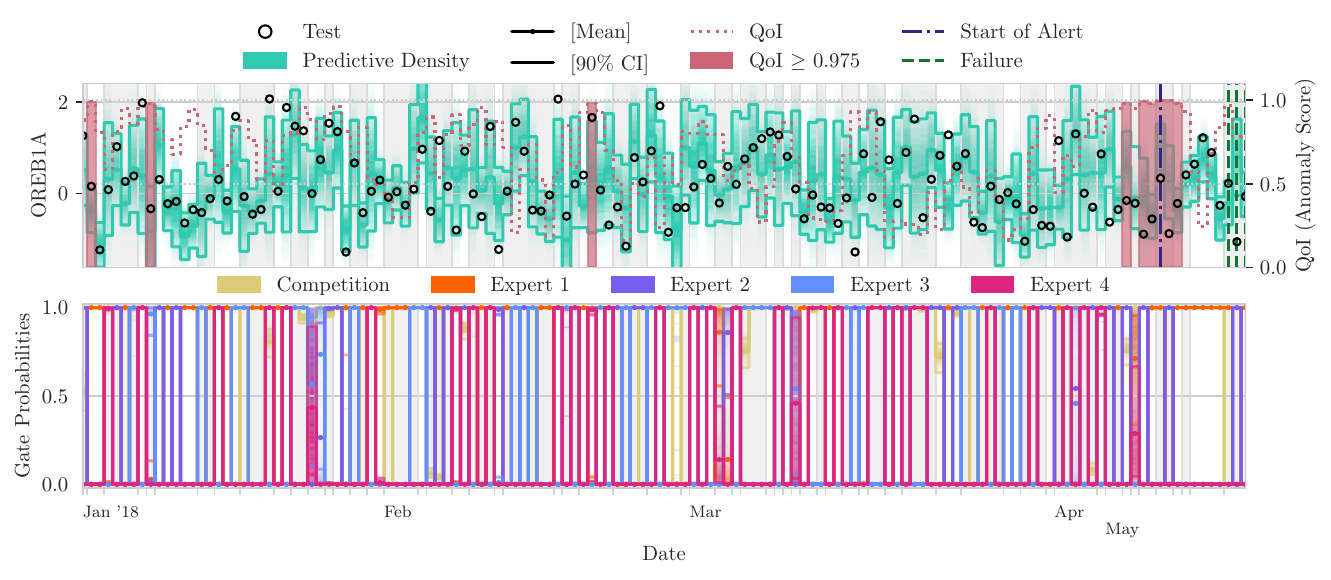}
	\caption{Predictive Inference for the OREB1A Health Index of the Second Helicopter on the Train (top), Validation (middle), and Test (bottom) Sets.}\label{fig:69019-126-oreb1a}
\end{figure}
The results for individual HIs
are reported in Table~\ref{tab:69019-126-metrics}.
The Fault Detection metrics on Pooled HIs for different validity windows on the Test set are reported in Table~\ref{tab:69019-126-failure-detection}, whereas a visualization of the Pooled Anomaly Score, alerts and faults occurrences can be found in Figure~\ref{fig:69019-126}.
For comparison, see the same metrics, computed on the algorithm presented in~\citet{2022leoniNewComprehensiveMonitoring}, in Table~\ref{tab:69019-126-failure-detection-jessica}.
\begin{table}[t!]
    \centering
    \input{tables/69019_126/merge/individual_indices.tex}
    \caption{Train and Test Metrics for each Health Index of the Second Helicopter.}\label{tab:69019-126-metrics}
\end{table}
In this case, we can observe a higher precision than in the methodology described in~\citet{2022leoniNewComprehensiveMonitoring}, due to our filtering policy.%
\begin{table}[tb!]
	\centering
	\input{tables/69019_126/merge/pooled_indices.tex}
	\caption{Fault Detection Metrics for the Second Helicopter (CoCoAFusE).}\label{tab:69019-126-failure-detection}
\end{table}%
\begin{table}[tb!]
	\centering
	\input{tables/69019_126/Jessica/pooled_indices.tex}
	\caption{Fault Detection Metrics for the Second Helicopter (from~\citet{2022leoniNewComprehensiveMonitoring}).}\label{tab:69019-126-failure-detection-jessica}
\end{table}%
\begin{figure}[tb!]
	\includegraphics[width=\textwidth, trim={0 3cm 0 0}, clip]{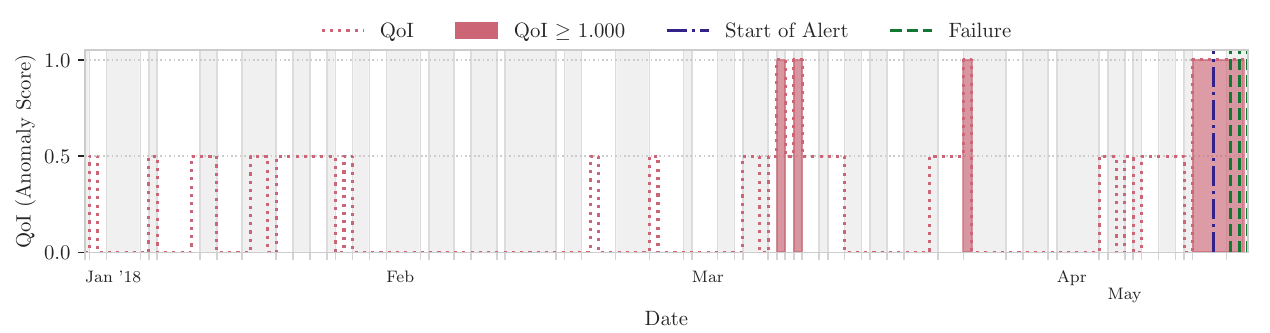}
	\caption{Test Anomaly Scores and Alerts on Pooled Indices of the Second Helicopter.}\label{fig:69019-126}
\end{figure}%
\subsection{Explainability Analysis}\label{sec:explainability-analysis}
In Section~\ref{sec:explainability}, we have discussed some approaches to inspect the models fitted via CoCoAFusE.
We deem that
this
plays a crucial
role in
algorithm
selection,
particularly in
the considered
Fault Detection
context.
This is because being able to effectively raise alarms on a component or machinery does not, by itself, allow users to gain trust in the model.
Also,
in case of
misdetection,
interpretability
becomes key
to improve
the model's
performance
and understand
its behavior.
Because of this, we believe that CoCoAFusE has a competitive edge compared to many standard Machine Learning models in light of its interpretability-oriented design.
In the follwing, we present Explainability Analyses on the ENHKUR model for the First Helicopter (see Figure~\ref{fig:HC_3-26-enhkur}) and the OREB1A model for the Second (see Figure~\ref{fig:69019-126-oreb1a}).
\paragraph{ENHKUR Model (First Helicopter)}
We use the procedure described in Section~\ref{sec:explainability} to obtain a low-dimensional set of coordinates upon which to define a grid of points for inference.
Since, of the ENHKUR Model's input features, only an affine transformation of the ENHM6 HI enters the gating, the resulting set of coordinates is one-dimensional and affine in ENHM6, and the grid consists of equispaced scalars.
We can use such newly-defined quantity (affine in ENHM6), which we name ``Component 1'', to create a fictitious dataset for inference which lays on a straight line within the 22-dimensional space of features
(all of the fictitious variables except for ENHM6 are constant and set to their mean value over the Train set by our upscaling procedure).
The inference on such data against the actual Train observations is displayed in Figure~\ref{fig:HC_3_26-enhkur-explanations}.
\begin{figure}[b!]
	\includegraphics[width=\textwidth]{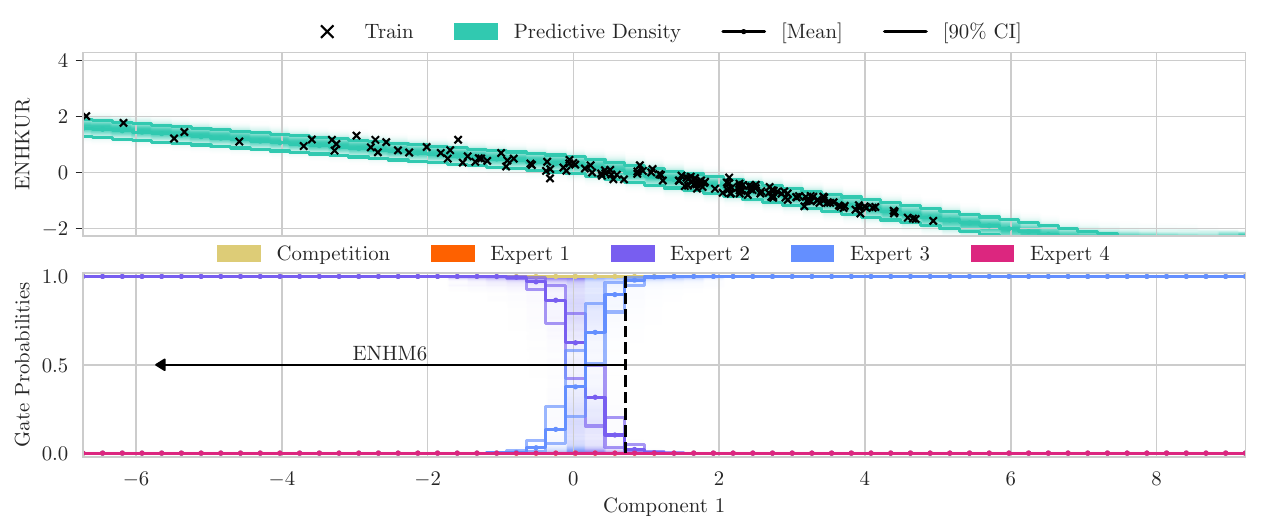}
	\caption{Explanatory Analysis on the ENHKUR Model for the First Helicopter.}\label{fig:HC_3_26-enhkur-explanations}
\end{figure}
We can
derive two conclusions:
\begin{itemize}
	\item All but two of the experts (2 and 3) are never used, since the rank of the Expert Gate's coefficient matrix is 2, which means that we could in principle have trained the very same model by imposing $M = 2$;
	\item One of the active experts (Expert 2) handles data with high ENHM6 (see the black arrow in Figure\ref{fig:HC_3_26-enhkur-explanations}, representing the direction of increase), whereas the other (Expert 3) is primarily engaged for low ENHM6.
\end{itemize}
Thorugh plots such as Figure~\ref{fig:HC_3_26-enhkur-explanations}, we can visually dissect the ENHKUR model into its linear submodels and thus better inspect its predictions,
evaluate counterfactuals,
or refine data collection during an experimental campaign.
\paragraph{OREB1A Model (Second Helicopter)}
In this second case, the procedure in Section~\ref{sec:explainability} can only be used after manipulating the Gates' coefficients matrix $\bm{A}_\mathcal{G}$.
In particular, we use SVD to first transform it into a $2\times (n + 1)$ matrix, where $n = 22$ is the number of features, and then define the 2D subspace in which we can define a regular grid of point for visualization.
Within this subspace, whose coordinates we label ``Coordinate 1'' and ``Coordinate 2'', corresponding to the 2 largest singular values of $\bm{A}_0$,
we can embed a regular grid of points to perform inference.
Figure~\ref{fig:69019_126-oreb1a-explanations} displays the results:
\begin{figure}[p!]
	\subfigure[Average Expert Activation]{\includegraphics[width=0.5\textwidth]{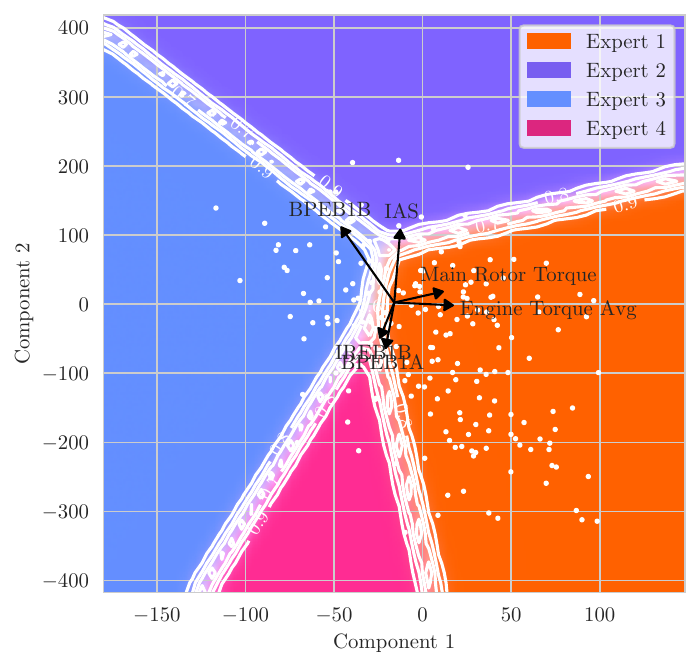}\label{fig:69019_126-oreb1a-explanation-expert-activation}}
	\subfigure[Predictive Mean]{\includegraphics[width=0.5\textwidth]{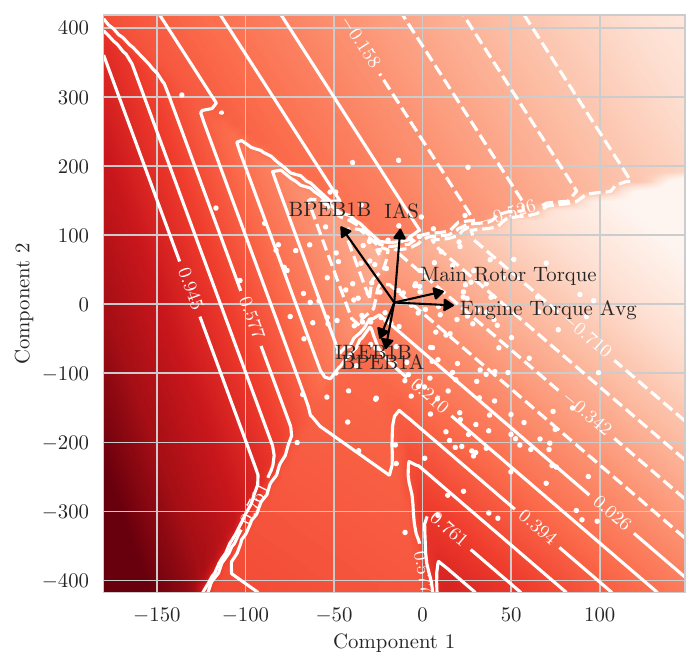}\label{fig:69019_126-oreb1a-explanation-pred-mean}}
	\subfigure[Predictive Standard Deviation]{\includegraphics[width=0.5\textwidth]{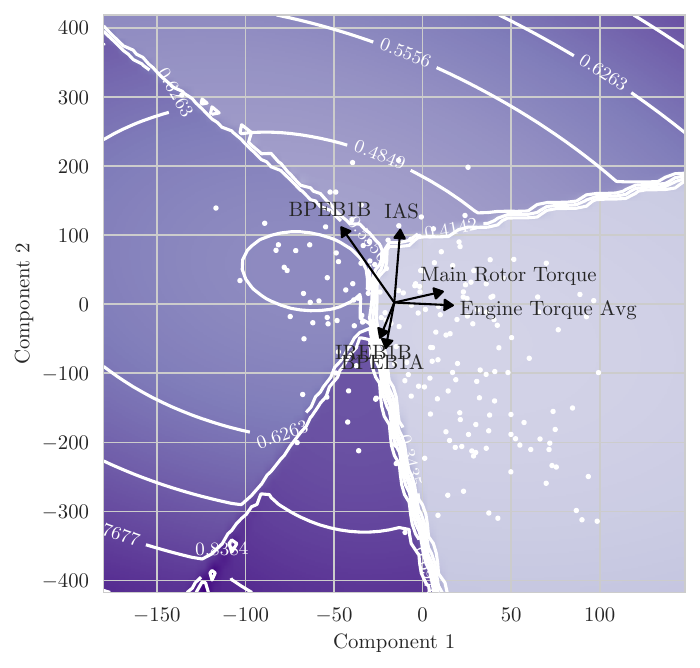}\label{fig:69019_126-oreb1a-explanation-pred-std}}
	\subfigure[Pred. Mean vs. Avg. Expert Activation]{\includegraphics[width=0.5\textwidth]{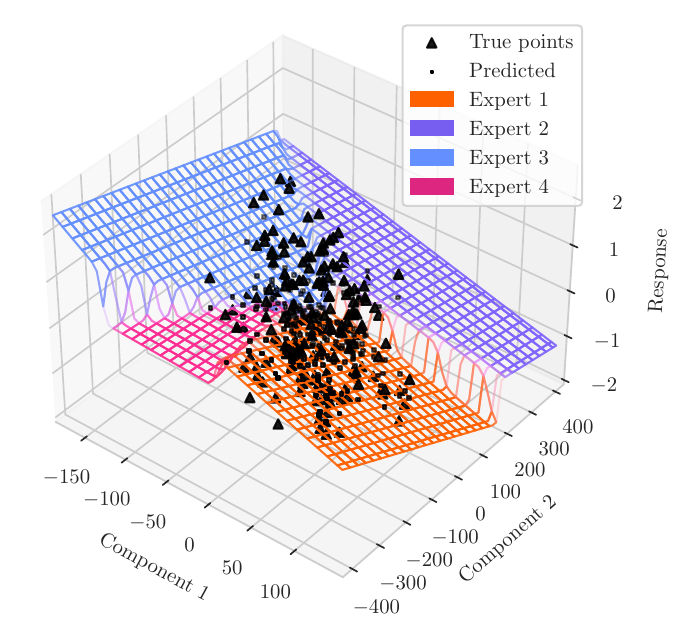}\label{fig:69019_126-oreb1a-explanation-surf}}
	\caption{Explanatory Analysis on the OREB1A Model for the Second Helicopter.}\label{fig:69019_126-oreb1a-explanations}
\end{figure}
\begin{enumerate}
	\item Figure~\ref{fig:69019_126-oreb1a-explanation-expert-activation} portrays the average activation of the engaged Expert in the 2D subspace (color-coded, with countour lines representing the value of the average activation); superimposed are black arrows representing the directions of maximal change of each feature (i.e., the projection of the original data coordinates) on the plane;
	\item Figure~\ref{fig:69019_126-oreb1a-explanation-pred-mean} portrays the predictive mean in the 2D subspace (shades of red, with contour lines representing the value);
	\item Figure~\ref{fig:69019_126-oreb1a-explanation-pred-std} portrays the predictive standard error in the 2D subspace (shades of purple, with contour lines representing the value);
	\item Figure~\ref{fig:69019_126-oreb1a-explanation-surf} contains a 3D plot of the response against the two Components, with the triangle-shaped black markers representing actual observations from the Train set, the small black squares designating the corresponding Posterior Mean predictions, the wireframe indicating the Predictive Mean on each location, color-coded according to the Expert.
\end{enumerate}
Figure~\ref{fig:69019_126-oreb1a-explanations} helps interpreting the CoCoAFusE model as a gated superimposition of the linear Experts.
Indeed, the arrows in Figures~\ref{fig:69019_126-oreb1a-explanation-expert-activation}-\ref{fig:69019_126-oreb1a-explanation-pred-std} account for effects of individual input features on Expert allocations (see Figure~\ref{fig:69019_126-oreb1a-explanation-expert-activation}), predictive mean (Figure~\ref{fig:69019_126-oreb1a-explanation-pred-mean}), and uncertainty (Figure~\ref{fig:69019_126-oreb1a-explanation-pred-std}).

%% file: tables/azure_1/merge/individual_indices.tex
\begin{tabularx}{\textwidth}{Xccccc}
	\toprule

	& Train & \multicolumn{4}{c}{Test} \\
	\cmidrule(lr){2-2}\cmidrule(lr){3-6}

	& & & \multicolumn{3}{c}{1 Day to Event} \\
	\cmidrule(lr){4-6}

	& $_\uparrow^\downarrow$ 95\% CIC
	& $_\uparrow^\downarrow$ 95\% CIC
	& $\uparrow$ Prec. [\%]
	& $\uparrow$ Rec. [\%]
	& $\uparrow$ F1 [\%]\\

	\midrule

	pressure & 
		0.96\textpm0.01 &
		0.95\textpm0.01 &
		0.00 &
		0.00 &
		0.00\\

	rotate & 
		0.96\textpm0.01 &
		0.93\textpm0.01 &
		100.00 &
		25.00 &
		40.00\\

	vibration & 
		0.94\textpm0.02 &
		0.87\textpm0.01 &
		100.00 &
		75.00 &
		85.71\\

	volt & 
		0.95\textpm0.02 &
		0.96\textpm0.01 &
		0.00 &
		0.00 &
		0.00\\

	\bottomrule
\end{tabularx}

%% file: tables/azure_1/merge/pooled_indices.tex
\begin{tabularx}{\textwidth}{Xcccc}
	\toprule

	Days to Event
	& Samples in Range
	& $\uparrow$ Prec. [\%]
	& $\uparrow$ Rec. [\%]
	& $\uparrow$ F1 [\%]\\

	\midrule

	4-1 &
	384 &
	100.00 &
	100.00 &
	100.00\\

	\bottomrule
\end{tabularx}

%% file: tables/HC_3_26/merge/individual_indices.tex
\begin{tabularx}{\textwidth}{Xccccc}
	\toprule

	& Train & \multicolumn{4}{c}{Test} \\
	\cmidrule(lr){2-2}\cmidrule(lr){3-6}

	& & & \multicolumn{3}{c}{60 Days to Event} \\
	\cmidrule(lr){4-6}

	& $_\uparrow^\downarrow$ 95\% CIC
	& $_\uparrow^\downarrow$ 95\% CIC
	& $\uparrow$ Prec. [\%]
	& $\uparrow$ Rec. [\%]
	& $\uparrow$ F1 [\%]\\

	\midrule

	ENHKUR & 
		0.96\textpm0.01 &
		0.94\textpm0.03 &
		100.00 &
		100.00 &
		100.00\\

	ENHM6 & 
		0.96\textpm0.02 &
		0.94\textpm0.03 &
		0.00 &
		0.00 &
		0.00\\

	ENHP2P & 
		0.95\textpm0.02 &
		1.00\textpm0.00 &
		0.00 &
		0.00 &
		0.00\\

	ENHSTD & 
		0.98\textpm0.01 &
		0.99\textpm0.01 &
		0.00 &
		0.00 &
		0.00\\

	S1R & 
		0.96\textpm0.02 &
		0.90\textpm0.04 &
		0.00 &
		0.00 &
		0.00\\

	S2R & 
		0.98\textpm0.01 &
		0.90\textpm0.04 &
		100.00 &
		100.00 &
		100.00\\

	SnRA & 
		0.94\textpm0.02 &
		0.94\textpm0.03 &
		0.00 &
		0.00 &
		0.00\\

	\bottomrule
\end{tabularx}

%% file: tables/HC_3_26/merge/pooled_indices.tex
\begin{tabularx}{\textwidth}{lXccc}
	\toprule

	Days to Event
	& Samples in Range
	& $\uparrow$ Prec. [\%]
	& $\uparrow$ Rec. [\%]
	& $\uparrow$ F1 [\%]\\

	\midrule

	1 &
	2 &
	0.00 &
	0.00 &
	0.00\\
	
	89-17 &
	46 &
	100.00 &
	100.00 &
	100.00\\

	\bottomrule
\end{tabularx}

%% file: tables/HC_3_26/Jessica/pooled_indices.tex
\begin{tabularx}{\textwidth}{lXccc}
	\toprule

	Days to Event
	& Samples in Range
	& $\uparrow$ Prec. [\%]
	& $\uparrow$ Rec. [\%]
	& $\uparrow$ F1 [\%]\\

	\midrule

	89-1 &
	48 &
	100.00 &
	100.00 &
	100.00\\

	\bottomrule
\end{tabularx}

%% file: tables/69019_126/merge/individual_indices.tex
\begin{tabularx}{\textwidth}{Xccccc}
	\toprule

	& Train & \multicolumn{4}{c}{Test} \\
	\cmidrule(lr){2-2}\cmidrule(lr){3-6}

	& & & \multicolumn{3}{c}{60 Days to Event} \\
	\cmidrule(lr){4-6}

	& $_\uparrow^\downarrow$ 95\% CIC
	& $_\uparrow^\downarrow$ 95\% CIC
	& $\uparrow$ Prec. [\%]
	& $\uparrow$ Rec. [\%]
	& $\uparrow$ F1 [\%]\\

	\midrule

	BPEB1A & 
		0.96\textpm0.01 &
		0.87\textpm0.03 &
		50.00 &
		100.00 &
		66.67\\

	BPEB1B & 
		0.98\textpm0.01 &
		0.86\textpm0.03 &
		0.00 &
		0.00 &
		0.00\\

	FTEB1A & 
		1.00\textpm0.00 &
		0.99\textpm0.01 &
		0.00 &
		0.00 &
		0.00\\

	FTEB1B & 
		0.98\textpm0.01 &
		0.97\textpm0.01 &
		0.00 &
		0.00 &
		0.00\\

	IREB1A & 
		0.99\textpm0.01 &
		0.93\textpm0.02 &
		100.00 &
		100.00 &
		100.00\\

	IREB1B & 
		0.94\textpm0.02 &
		0.91\textpm0.02 &
		0.00 &
		0.00 &
		0.00\\

    OREB1A & 
		0.96\textpm0.01 &
		0.86\textpm0.03 &
		100.00 &
		100.00 &
		100.00\\

	OREB1B & 
		0.92\textpm0.02 &
		0.93\textpm0.02 &
		0.00 &
		0.00 &
		0.00\\

	SHEB1A & 
		1.00\textpm0.00 &
		0.98\textpm0.01 &
		0.00 &
		0.00 &
		0.00\\

	SHEB1B & 
		1.00\textpm0.00 &
		0.98\textpm0.01 &
		0.00 &
		0.00 &
		0.00\\

	\bottomrule
\end{tabularx}

%% file: tables/69019_126/merge/pooled_indices.tex
\begin{tabularx}{\textwidth}{lXccc}
	\toprule

	Days to Event
	& Samples in Range
	& $\uparrow$ Prec. [\%]
	& $\uparrow$ Rec. [\%]
	& $\uparrow$ F1 [\%]\\

	\midrule

	89-4 &
	68 &
	100.00 &
	100.00 &
	100.00\\

	\bottomrule
\end{tabularx}

%% file: tables/69019_126/Jessica/pooled_indices.tex
\begin{tabularx}{\textwidth}{lXccc}
	\toprule

	Days to Event
	& Samples in Range
	& $\uparrow$ Prec. [\%]
	& $\uparrow$ Rec. [\%]
	& $\uparrow$ F1 [\%]\\

	\midrule

	70-4 &
	49 &
	50.00 &
	100.00 &
	66.67\\
	
	89-71 &
	19 &
	100.00 &
	100.00 &
	100.00\\

	\bottomrule
\end{tabularx}

%% file: sections/99_appendices.tex
\section{Proofs}\label{app:proofs}
\begin{proof}[Proof (of Proposition~\ref{prop:cdf-transform})]
    By definition, since $Z$ is a continuous random variable with full support, it admits a density $f_Z(z) > 0$ on $\mathbb{R}$, of which $F_Z$ is a primitive.
    Since (by the Fundamental Theorem of Calculus) $F_Z^\prime = f_Z > 0$, then $F_Z$ is strictly monotonic and thus invertible.
    Denote its inverse with $F_Z^{-1}:(0, 1)\to\mathbb{R}$.
    Now observe that, given $u\in(0, 1)$, we have
    \begin{equation*}
        F_U(u) = \mathbb{P}(U \le u) = \mathbb{P}(Z \le F_Z^{-1}(u)) = F_Z(F_Z^{-1}(u)) = u
    \end{equation*}
    Since $F_U$ is a cumulative distribution function, then it must be right continuous, which entails $F_U(u) = 0$ for $u\le0$, and
    monotonically increasing, which entails $F_U(u) \ge 1$ for $u\ge 1$, as well as having image $[0, 1]$ which entails $F_U(u) = 1$ for $u\ge 1$. $F_U$ is exactly the cumulative distribution function of the uniform-$(0, 1)$ distribution, which concludes the proof.
\end{proof}
\begin{proof}[Proof (of Corollary~\ref{cor:cdf-transform-compact})]
    By restricting the domain, we observe that $F_U$ is invertible on that restriction.
    By identical arguments the proof holds.
\end{proof}
\begin{proof}[Proof (of Lemma~\ref{lemma:cond-indep-of-u})]
    $u_1$ and $u_2$ are continuous (thus, Borel-measurable) transformations of conditionally independent random variables.
    The sigma-algebrae generated by measurable functions of the sigma-algebrae induced by $y_1 | \bm{x}_1, \bm{x}_2$, $\bm{\theta}$ and $y_1 | \bm{x}_1, \bm{x}_2$, $\bm{\theta}$ are sub-algebrae of the respective two.
    Thus, conditionally on $\bm{x}_1, \bm{x}_2, \bm{\theta}$, $u_1$ and $u_2$ are independent.
\end{proof}
\begin{proof}[Proof (of Lemma~\ref{lemma:distribution-of-min-uniforms})]
    Observe that, by the properties of the minimum,\linebreak$U^\prime \le u \Leftrightarrow 1 - 2\min\left\{U, 1-U\right\}  \le u \Leftrightarrow \frac{1-u}{2} \le \min\left\{U, 1-U\right\} \Leftrightarrow \big(\frac{1-u}{2} \le$ $\le U \big)\wedge \big(\frac{1-u}{2} \le 1-U\big)$.
    Therefore,
    \begin{equation*}
        \textstyle \mathbb{P}(U^\prime \le u) = \textstyle \mathbb{P}\left(\frac{1-u}{2} \le U, \frac{1-u}{2} \le 1 - U\right) =
        \textstyle \mathbb{P}\left(\frac{1-u}{2} \le U \le \frac{1+u}{2}\right)
        \stackrel{u\in(0,1)}{=} u,
    \end{equation*}
    which is precisely the density of a Uniform on $(0, 1)$.
\end{proof}
\begin{proof}[Proof (of Corollary~\ref{cor:distribution-AS})]
    Because of Corollary~\ref{cor:cdf-transform-compact}, $Q\left(\mathcal{P}_t^{k}; \bm{\theta}\right)$ is uniformly distributed.
    If we replace $U$ with $Q\left(\mathcal{P}_t^{k}; \bm{\theta}\right)$, by Lemma~\ref{lemma:distribution-of-min-uniforms}, $AS_{\bm{\theta}}\left(\mathcal{P}_t^{k}\right)$ is once again uniformly distributed on the interval $(0, 1)$, conditionally on $\bm{\theta}$.
\end{proof}

\section{Model Selection}\label{app:pareto-model-selection}
As mentioned,
to perform the Failure Detection task described in Section~\ref{sec:anomaly-scores},
we must carefully balance response fit and coverage, meaning that credible intervals should contain a fraction of points that is very close to the specified confidence.
We thus want to define a measure that, unlike~\eqref{eq:CIC}, morally considers \textit{any} level of confidence.
This is because the Anomaly Score defined in Section~\ref{sec:anomaly-scores} is reasonable under the assumption that  $F(\bullet|\bm{x}_t; \bm{\theta})$ closely matches the conditional law of the response, given $(\bm{x}_t, \bm{\theta})$.
Since it is impossible to evaluate all levels of confidence, we select finitely many values for the confidence $\alpha$, equispaced in the $(0, 1)$ interval, i.e., $\alpha \in \mathcal{K} = \{1/K, \dots, (K-1)/K\}$, for some positive integer $K$.
For each $\alpha$, we start by counting the number of Train observations falling within the posterior predictive credible intervals, i.e.,
\begin{equation}
	\textstyle C_{\alpha} = \sum_{t=1}^{N}\mathds{1}\left\{y_t\in\textup{CI}_{\alpha}\left(\bm{x}_t\right)\right\}
\end{equation}
with $\textup{CI}_{\alpha}\left(\bm{x}_t\right)$ computed empirically from a sample from the posterior predictive distribution evaluated at each $\bm{x}_t$.
Notice that this interval is different than the one appearing in~\eqref{eq:CIC}, as it does not depend on a single posterior sample.
If the posterior predictive density were to match exactly the data generating mechanism, then, each $C_{\alpha}$ should be distributed as a binomial with number of trials $N$ and success probability $\alpha$.
As a cost functional on the realization of counts $\{C_{\alpha}, \alpha \in \mathcal{K}\}$, we thus propose to:
\begin{itemize}
	\item Compute the negative log-probability of observing $\{C_{\alpha}$, assuming an underlying binomial with parameters $(N, \alpha)$;
	\item Summing up the resulting values for all $\alpha \in \mathcal{K}$.
\end{itemize}
Based on previous coverage functional and the PSIS-LOO performance index defined in Section~\ref{ssec:fit-metrics}, both computed on the Train set, we thus introduce the procedure for model selection, aimed at trading off between predictive fit and model coverage, described in Algorithm~\ref{alg:pareto-model-selection} (adapted from~\citet{2025raffaugoliniCoCoAFusEMixturesExperts}).
\begin{algorithm}[tb!]
	\caption{Model Selection Procedure Pseudocode}\label{alg:pareto-model-selection}
	\begin{algorithmic}
		\Require \textit{metric}$()$ (performance index functional)
		\Require \textit{coverage}$()$ (coverage functional)
		\Require $\nu$ (probability lower bound threshold for selection)
		\Require trials (experiments with different hyperparameters)
		\State $\textup{paretoSet} \gets \textup{pareto}(\textup{trials};\ \textup{\textit{metric}},\ \textup{\textit{coverage}})$
		\State $\textup{best} \gets \varnothing$
		\For{$p \textup{ \textbf{in} } \textup{sortAscending}(\textup{paretoSet};\ \textup{\textit{coverage}})$}
		\If{$\textup{best} = \varnothing$ \textbf{or} $\textup{ChebyshevLB}(\textup{\textit{metric}}(\textup{best}) < \textup{\textit{metric}}(\textup{p})) > \nu$}
			\State {$\textup{best} \gets p$}
		\EndIf
		\EndFor\\
		\Return best
	\end{algorithmic}
\end{algorithm}
The idea is to explore the Pareto frontier under increasing fit and decreasing coverage (in this order). For every new visited trial, we compute the Chebyshev lower bound on the probability that the current iterate has a higher metric than the previous best. If the Chebyshev lower bound exceeds a predetermined threshold $\nu \in (0, 1)$, then we accept the current trial as best. Otherwise, we discard it even when the computed metric is higher, until termination.
\begin{remark}
	The Chebyshev lower bound computation assumes model evaluations are independent, with mean and standard deviation equal to the associated performance index and its standard error, respectively. Under these assumptions, the Chebyshev lower bound is a \textit{pessimistic} estimate of the probability of improvement under configurations with worse coverage.
    In all our tests, we fix $\nu = 0.5$, i.e., we want to be convinced that, under our assumptions, we improve performance (at least) more often than not.
\end{remark}

%% file: references.bib
@article{1979barrowSplineNotationApplied,
  title = {Spline {{Notation Applied}} to a {{Volume Problem}}},
  author = {Barrow, D. L. and Smith, P. W.},
  year = {1979},
  journal = {The American Mathematical Monthly},
  volume = {86},
  number = {1},
  eprint = {2320304},
  eprinttype = {jstor},
  pages = {50--51},
  publisher = {[Taylor \& Francis, Ltd., Mathematical Association of America]},
  issn = {0002-9890},
  doi = {10.2307/2320304},
  urldate = {2025-04-09}
}

@book{1996raoHandbookConditionMonitoring,
  title = {Handbook of {{Condition Monitoring}}},
  author = {Rao, B. K. N.},
  year = {1996},
  publisher = {Elsevier},
  googlebooks = {ka\_rg1grHEUC},
  isbn = {978-1-85617-234-9},
  langid = {english}
}

@article{2005choyVibrationMonitoringDamage,
  title = {Vibration {{Monitoring}} and {{Damage Quantification}} of {{Faulty Ball Bearings}}},
  author = {Choy, F. K. and Zhou, J. and Braun, M. J. and Wang, L.},
  year = {2005},
  month = oct,
  journal = {Journal of Tribology},
  volume = {127},
  number = {4},
  pages = {776--783},
  issn = {0742-4787, 1528-8897},
  doi = {10.1115/1.2033899},
  urldate = {2024-12-10},
  langid = {english}
}

@article{2005samuelReviewVibrationbasedTechniques,
  title = {A Review of Vibration-Based Techniques for Helicopter Transmission Diagnostics},
  author = {Samuel, Paul D. and Pines, Darryll J.},
  year = {2005},
  month = apr,
  journal = {Journal of Sound and Vibration},
  volume = {282},
  number = {1-2},
  pages = {475--508},
  issn = {0022460X},
  doi = {10.1016/j.jsv.2004.02.058},
  urldate = {2024-12-09},
  copyright = {https://www.elsevier.com/tdm/userlicense/1.0/},
  langid = {english}
}

@article{2009chandolaAnomalyDetectionSurvey,
  title = {Anomaly Detection: {{A}} Survey},
  shorttitle = {Anomaly Detection},
  author = {Chandola, Varun and Banerjee, Arindam and Kumar, Vipin},
  year = {2009},
  month = jul,
  journal = {ACM Computing Surveys},
  volume = {41},
  number = {3},
  pages = {1--58},
  issn = {0360-0300, 1557-7341},
  doi = {10.1145/1541880.1541882},
  urldate = {2025-05-14},
  langid = {english}
}

@article{2012yukselTwentyYearsMixture,
  title = {Twenty {{Years}} of {{Mixture}} of {{Experts}}},
  author = {Yuksel, Seniha Esen and Wilson, Joseph N. and Gader, Paul D.},
  year = {2012},
  month = aug,
  journal = {IEEE Transactions on Neural Networks and Learning Systems},
  volume = {23},
  number = {8},
  pages = {1177--1193},
  issn = {2162-2388},
  doi = {10.1109/TNNLS.2012.2200299},
  urldate = {2023-11-15}
}

@article{2014leiConditionMonitoringFault,
  title = {Condition Monitoring and Fault Diagnosis of Planetary Gearboxes: {{A}} Review},
  shorttitle = {Condition Monitoring and Fault Diagnosis of Planetary Gearboxes},
  author = {Lei, Yaguo and Lin, Jing and Zuo, Ming J. and He, Zhengjia},
  year = {2014},
  month = feb,
  journal = {Measurement},
  volume = {48},
  pages = {292--305},
  issn = {02632241},
  doi = {10.1016/j.measurement.2013.11.012},
  urldate = {2024-12-10},
  langid = {english}
}

@article{2014zimrozDiagnosticsBearingsPresence,
  title = {Diagnostics of Bearings in Presence of Strong Operating Conditions Non-Stationarity---{{A}} Procedure of Load-Dependent Features Processing with Application to Wind Turbine Bearings},
  author = {Zimroz, Radoslaw and Bartelmus, Walter and Barszcz, Tomasz and Urbanek, Jacek},
  year = {2014},
  month = may,
  journal = {Mechanical Systems and Signal Processing},
  volume = {46},
  number = {1},
  pages = {16--27},
  issn = {08883270},
  doi = {10.1016/j.ymssp.2013.09.010},
  urldate = {2024-12-10},
  langid = {english}
}

@article{2015ambhoreToolConditionMonitoring,
  title = {Tool {{Condition Monitoring System}}: {{A Review}}},
  shorttitle = {Tool {{Condition Monitoring System}}},
  author = {Ambhore, Nitin and Kamble, Dinesh and Chinchanikar, Satish and Wayal, Vishal},
  year = {2015},
  journal = {Materials Today: Proceedings},
  volume = {2},
  number = {4-5},
  pages = {3419--3428},
  issn = {22147853},
  doi = {10.1016/j.matpr.2015.07.317},
  urldate = {2025-05-09},
  copyright = {https://www.elsevier.com/tdm/userlicense/1.0/},
  langid = {english}
}

@article{2016dasVibrationbasedDamageDetection,
  title = {Vibration-Based Damage Detection Techniques Used for Health Monitoring of Structures: A Review},
  shorttitle = {Vibration-Based Damage Detection Techniques Used for Health Monitoring of Structures},
  author = {Das, Swagato and Saha, P. and Patro, S. K.},
  year = {2016},
  month = jul,
  journal = {Journal of Civil Structural Health Monitoring},
  volume = {6},
  number = {3},
  pages = {477--507},
  issn = {2190-5452, 2190-5479},
  doi = {10.1007/s13349-016-0168-5},
  urldate = {2024-12-09},
  langid = {english}
}

@article{2017vehtariPracticalBayesianModel,
  title = {Practical {{Bayesian}} Model Evaluation Using Leave-One-out Cross-Validation and {{WAIC}}},
  author = {Vehtari, Aki and Gelman, Andrew and Gabry, Jonah},
  year = {2017},
  month = sep,
  journal = {Statistics and Computing},
  volume = {27},
  number = {5},
  pages = {1413--1432},
  issn = {1573-1375},
  doi = {10.1007/s11222-016-9696-4},
  urldate = {2024-01-29},
  langid = {english}
}

@article{2019stetcoMachineLearningMethods,
  title = {Machine Learning Methods for Wind Turbine Condition Monitoring: {{A}} Review},
  shorttitle = {Machine Learning Methods for Wind Turbine Condition Monitoring},
  author = {Stetco, Adrian and Dinmohammadi, Fateme and Zhao, Xingyu and Robu, Valentin and Flynn, David and Barnes, Mike and Keane, John and Nenadic, Goran},
  year = {2019},
  month = apr,
  journal = {Renewable Energy},
  volume = {133},
  pages = {620--635},
  issn = {09601481},
  doi = {10.1016/j.renene.2018.10.047},
  urldate = {2024-12-11},
  langid = {english}
}

@article{2019wangVibrationBasedCondition,
  title = {Vibration Based Condition Monitoring and Fault Diagnosis of Wind Turbine Planetary Gearbox: {{A}} Review},
  shorttitle = {Vibration Based Condition Monitoring and Fault Diagnosis of Wind Turbine Planetary Gearbox},
  author = {Wang, Tianyang and Han, Qinkai and Chu, Fulei and Feng, Zhipeng},
  year = {2019},
  month = jul,
  journal = {Mechanical Systems and Signal Processing},
  volume = {126},
  pages = {662--685},
  issn = {08883270},
  doi = {10.1016/j.ymssp.2019.02.051},
  urldate = {2024-12-09},
  langid = {english}
}

@article{2020mubaraaliIntelligentFaultDiagnosis,
  title = {Intelligent Fault Diagnosis in Microprocessor Systems for Vibration Analysis in Roller Bearings in Whirlpool Turbine Generators Real Time Processor Applications},
  author = {Mubaraali, L. and Kuppuswamy, N. and Muthukumar, R.},
  year = {2020},
  month = jul,
  journal = {Microprocessors and Microsystems},
  volume = {76},
  pages = {103079},
  issn = {01419331},
  doi = {10.1016/j.micpro.2020.103079},
  urldate = {2024-12-10},
  langid = {english}
}

@article{2021mauricioAdvancedSignalProcessing,
  title = {Advanced Signal Processing Techniques for Helicopter's Gearbox Monitoring},
  author = {Mauricio, A and Wang, W and Antoni, J and Gryllias, K},
  year = {2021},
  month = may,
  journal = {Journal of Physics: Conference Series},
  volume = {1909},
  number = {1},
  pages = {012043},
  issn = {1742-6588, 1742-6596},
  doi = {10.1088/1742-6596/1909/1/012043},
  urldate = {2024-12-10},
  langid = {english}
}

@article{2021soolonwahRegressionbasedDamageDetection,
  title = {A Regression-Based Damage Detection Method for Structures Subjected to Changing Environmental and Operational Conditions},
  author = {Soo Lon Wah, William and Chen, Yung-Tsang and Owen, John S},
  year = {2021},
  month = feb,
  journal = {Engineering Structures},
  volume = {228},
  pages = {111462},
  issn = {01410296},
  doi = {10.1016/j.engstruct.2020.111462},
  urldate = {2024-12-10},
  langid = {english}
}

@article{2021turnbullCombiningSCADAVibration,
  title = {Combining {{SCADA}} and Vibration Data into a Single Anomaly Detection Model to Predict Wind Turbine Component Failure},
  author = {Turnbull, Alan and Carroll, James and McDonald, Alasdair},
  year = {2021},
  month = mar,
  journal = {Wind Energy},
  volume = {24},
  number = {3},
  pages = {197--211},
  issn = {1095-4244, 1099-1824},
  doi = {10.1002/we.2567},
  urldate = {2024-12-10},
  langid = {english}
}

@article{2021vandeschootBayesianStatisticsModelling,
  title = {Bayesian Statistics and Modelling},
  author = {Van De Schoot, Rens and Depaoli, Sarah and King, Ruth and Kramer, Bianca and M{\"a}rtens, Kaspar and Tadesse, Mahlet G. and Vannucci, Marina and Gelman, Andrew and Veen, Duco and Willemsen, Joukje and Yau, Christopher},
  year = {2021},
  month = jan,
  journal = {Nature Reviews Methods Primers},
  volume = {1},
  number = {1},
  pages = {1--26},
  publisher = {Nature Publishing Group},
  issn = {2662-8449},
  doi = {10.1038/s43586-020-00001-2},
  urldate = {2025-01-17},
  copyright = {2021 Springer Nature Limited},
  langid = {english}
}

@article{2022badihiComprehensiveReviewSignalBased,
  title = {A {{Comprehensive Review}} on {{Signal-Based}} and {{Model-Based Condition Monitoring}} of {{Wind Turbines}}: {{Fault Diagnosis}} and {{Lifetime Prognosis}}},
  shorttitle = {A {{Comprehensive Review}} on {{Signal-Based}} and {{Model-Based Condition Monitoring}} of {{Wind Turbines}}},
  author = {Badihi, Hamed and Zhang, Youmin and Jiang, Bin and Pillay, Pragasen and Rakheja, Subhash},
  year = {2022},
  month = jun,
  journal = {Proceedings of the IEEE},
  volume = {110},
  number = {6},
  pages = {754--806},
  issn = {0018-9219, 1558-2256},
  doi = {10.1109/JPROC.2022.3171691},
  urldate = {2025-05-14},
  copyright = {https://creativecommons.org/licenses/by/4.0/legalcode},
  langid = {english}
}

@article{2022britoExplainableArtificialIntelligence,
  title = {An Explainable Artificial Intelligence Approach for Unsupervised Fault Detection and Diagnosis in Rotating Machinery},
  author = {Brito, Lucas C. and Susto, Gian Antonio and Brito, Jorge N. and Duarte, Marcus A.V.},
  year = {2022},
  month = jan,
  journal = {Mechanical Systems and Signal Processing},
  volume = {163},
  pages = {108105},
  issn = {08883270},
  doi = {10.1016/j.ymssp.2021.108105},
  urldate = {2024-12-09},
  langid = {english}
}

@article{2022huIntelligentAnomalyDetection,
  title = {An {{Intelligent Anomaly Detection Method}} for {{Rotating Machinery Based}} on {{Vibration Vectors}}},
  author = {Hu, Di and Zhang, Chen and Yang, Tao and Chen, Gang},
  year = {2022},
  month = jul,
  journal = {IEEE Sensors Journal},
  volume = {22},
  number = {14},
  pages = {14294--14305},
  issn = {1530-437X, 1558-1748, 2379-9153},
  doi = {10.1109/JSEN.2022.3179740},
  urldate = {2024-12-10},
  copyright = {https://ieeexplore.ieee.org/Xplorehelp/downloads/license-information/IEEE.html},
  langid = {english}
}

@article{2022leoniNewComprehensiveMonitoring,
  title = {A New Comprehensive Monitoring and Diagnostic Approach for Early Detection of Mechanical Degradation in Helicopter Transmission Systems},
  author = {Leoni, Jessica and Tanelli, Mara and Palman, Andrea},
  year = {2022},
  month = dec,
  journal = {Expert Systems with Applications},
  volume = {210},
  pages = {118412},
  issn = {09574174},
  doi = {10.1016/j.eswa.2022.118412},
  urldate = {2024-07-13},
  langid = {english}
}

@article{2022tiboniReviewVibrationBasedCondition,
  title = {A {{Review}} on {{Vibration-Based Condition Monitoring}} of {{Rotating Machinery}}},
  author = {Tiboni, Monica and Remino, Carlo and Bussola, Roberto and Amici, Cinzia},
  year = {2022},
  month = jan,
  journal = {Applied Sciences},
  volume = {12},
  number = {3},
  pages = {972},
  issn = {2076-3417},
  doi = {10.3390/app12030972},
  urldate = {2024-12-09},
  copyright = {https://creativecommons.org/licenses/by/4.0/},
  langid = {english}
}

@article{2023fengReviewVibrationbasedGear,
  title = {A Review of Vibration-Based Gear Wear Monitoring and Prediction Techniques},
  author = {Feng, Ke and Ji, J.C. and Ni, Qing and Beer, Michael},
  year = {2023},
  month = jan,
  journal = {Mechanical Systems and Signal Processing},
  volume = {182},
  pages = {109605},
  issn = {08883270},
  doi = {10.1016/j.ymssp.2022.109605},
  urldate = {2024-12-09},
  langid = {english}
}

@article{2023mengAnomalyDetectionConstruction,
  title = {Anomaly Detection for Construction Vibration Signals Using Unsupervised Deep Learning and Cloud Computing},
  author = {Meng, Qiuhan and Zhu, Songye},
  year = {2023},
  month = jan,
  journal = {Advanced Engineering Informatics},
  volume = {55},
  pages = {101907},
  issn = {14740346},
  doi = {10.1016/j.aei.2023.101907},
  urldate = {2024-12-10},
  langid = {english}
}

@article{2023romanssiniReviewVibrationMonitoring,
  title = {A {{Review}} on {{Vibration Monitoring Techniques}} for {{Predictive Maintenance}} of {{Rotating Machinery}}},
  author = {Romanssini, Marcelo and De Aguirre, Paulo C{\'e}sar C. and {Compassi-Severo}, Lucas and Girardi, Alessandro G.},
  year = {2023},
  month = jun,
  journal = {Eng},
  volume = {4},
  number = {3},
  pages = {1797--1817},
  issn = {2673-4117},
  doi = {10.3390/eng4030102},
  urldate = {2024-12-09},
  copyright = {https://creativecommons.org/licenses/by/4.0/},
  langid = {english}
}

@article{2023surucuConditionMonitoringUsing,
  title = {Condition {{Monitoring}} Using {{Machine Learning}}: {{A Review}} of {{Theory}}, {{Applications}}, and {{Recent Advances}}},
  shorttitle = {Condition {{Monitoring}} Using {{Machine Learning}}},
  author = {Surucu, Onur and Gadsden, Stephen Andrew and Yawney, John},
  year = {2023},
  month = jul,
  journal = {Expert Systems with Applications},
  volume = {221},
  pages = {119738},
  issn = {09574174},
  doi = {10.1016/j.eswa.2023.119738},
  urldate = {2025-05-14},
  langid = {english}
}

@misc{2023wadeBayesianDependentMixture,
  title = {Bayesian Dependent Mixture Models: {{A}} Predictive Comparison and Survey},
  shorttitle = {Bayesian Dependent Mixture Models},
  author = {Wade, Sara and Inacio, Vanda and Petrone, Sonia},
  year = {2023},
  month = jul,
  number = {arXiv:2307.16298},
  eprint = {2307.16298},
  primaryclass = {stat},
  publisher = {arXiv},
  doi = {10.1214/24-STS966},
  urldate = {2024-02-15},
  archiveprefix = {arXiv}
}

@article{2023yessoufouClassificationRegressionbasedConvolutional,
  title = {Classification and Regression-Based Convolutional Neural Network and Long Short-Term Memory Configuration for Bridge Damage Identification Using Long-Term Monitoring Vibration Data},
  author = {Yessoufou, Fadel and Zhu, Jinsong},
  year = {2023},
  month = nov,
  journal = {Structural Health Monitoring},
  volume = {22},
  number = {6},
  pages = {4027--4054},
  issn = {1475-9217, 1741-3168},
  doi = {10.1177/14759217231161811},
  urldate = {2024-12-10},
  langid = {english}
}

@article{2024arenaConceptualFrameworkMachine,
  title = {A Conceptual Framework for Machine Learning Algorithm Selection for Predictive Maintenance},
  author = {Arena, Simone and Florian, Eleonora and Sgarbossa, Fabio and S{\o}lvsberg, Endre and Zennaro, Ilenia},
  year = {2024},
  month = jul,
  journal = {Engineering Applications of Artificial Intelligence},
  volume = {133},
  pages = {108340},
  issn = {09521976},
  doi = {10.1016/j.engappai.2024.108340},
  urldate = {2025-03-27},
  langid = {english}
}

@article{2024jungAIBasedAnomalyDetection,
  title = {{{AI-Based Anomaly Detection Techniques}} for {{Structural Fault Diagnosis Using Low-Sampling-Rate Vibration Data}}},
  author = {Jung, Yub and Park, Eun-Gyo and Jeong, Seon-Ho and Kim, Jeong-Ho},
  year = {2024},
  month = jun,
  journal = {Aerospace},
  volume = {11},
  number = {7},
  pages = {509},
  issn = {2226-4310},
  doi = {10.3390/aerospace11070509},
  urldate = {2024-12-10},
  copyright = {https://creativecommons.org/licenses/by/4.0/},
  langid = {english}
}

@misc{2024leoniExplainableDatadrivenModeling,
  title = {Explainable Data-Driven Modeling via Mixture of Experts: Towards Effective Blending of Grey and Black-Box Models},
  shorttitle = {Explainable Data-Driven Modeling via Mixture of Experts},
  author = {Leoni, Jessica and Breschi, Valentina and Formentin, Simone and Tanelli, Mara},
  year = {2024},
  month = jan,
  number = {arXiv:2401.17118},
  eprint = {2401.17118},
  primaryclass = {cs, eess},
  publisher = {arXiv},
  urldate = {2024-07-24},
  archiveprefix = {arXiv},
  langid = {english}
}

@misc{2025raffaugoliniCoCoAFusEMixturesExperts,
  title = {{{CoCoAFusE}}: {{Beyond Mixtures}} of {{Experts}} via {{Model Fusion}}},
  shorttitle = {{{CoCoAFusE}}},
  author = {Raffa Ugolini, Aurelio and Tanelli, Mara and Breschi, Valentina},
  year = {2025},
  month = may,
  number = {arXiv:2505.01105},
  eprint = {2505.01105},
  primaryclass = {cs},
  publisher = {arXiv},
  doi = {10.48550/arXiv.2505.01105},
  urldate = {2025-05-05},
  archiveprefix = {arXiv}
}
